\numberwithin{equation}{section}
\newcommand{\Ba}{{\bm a}}
\newcommand{\Bw}{{\bm w}}
\newcommand{\Bx}{{\bm x}}
\newcommand{\Balpha}{{\bm \alpha}}
\newcommand{\Bzeta}{{\bm \zeta}}
\newcommand{\td}{{\rm d}}
\newcommand{\mE}{\mathbb{E}}
\newcommand{\tVar}{{\rm Var}}
\newtheorem{theorem}{Theorem}[section]
\newtheorem{lemma}[theorem]{Lemma}
\newtheorem{corollary}[theorem]{Corollary}
\newtheorem{remark}{Remark}[section]
\begin{document}
\DOI{}
\copyrightyear{}
\vol{}
\pubyear{}
\access{}
\appnotes{}
\copyrightstatement{}
\firstpage{1}

\title[optimization and generalization of PINN]{Optimization and generalization analysis for two-layer physics-informed neural networks without over-parametrization}

\author{Zhihan Zeng
\address{\orgdiv{School of Mathematical Sciences}, \orgname{University of Electronic Science and Technology of China}, \orgaddress{\state{Sichuan}, \country{China}}}}
\author{Yiqi Gu*
\address{\orgdiv{School of Mathematical Sciences}, \orgname{University of Electronic Science and Technology of China}, \orgaddress{\state{Sichuan}, \country{China}}}}
\authormark{Z. Zeng and Y. Gu}
\corresp[*]{Corresponding author: yiqigu@uestc.edu.cn}

\received{Date}{0}{Year}
\revised{Date}{0}{Year}
\accepted{Date}{0}{Year}


\abstract{This work focuses on the behavior of stochastic gradient descent (SGD) in solving least-squares regression with physics-informed neural networks (PINNs). Past work on this topic has been based on the over-parameterization regime, whose convergence may require the network width to increase vastly with the number of training samples. So, the theory derived from over-parameterization may incur prohibitive computational costs and is far from practical experiments. We perform new optimization and generalization analysis for SGD in training two-layer PINNs, making certain assumptions about the target function to avoid over-parameterization. Given $\epsilon>0$, we show that if the network width exceeds a threshold that depends only on $\epsilon$ and the problem, then the training loss and expected loss will decrease below $O(\epsilon)$.}
\keywords{physics-informed neural network; optimization; generalization; stochastic gradient descent; mean squared error.}

\allowdisplaybreaks
\sloppy
\maketitle

%

\section{Introduction}
Physics-informed neural networks (PINNs) have emerged as a promising approach to solving partial differential equations (PDEs) and other problems with physical constraints in recent years \citep{Raissi2019,Karniadakis2021}. Unlike traditional approximation functions, such as polynomials and finite elements, PINNs can alleviate the curse of dimensionality in some learning tasks, for example, whose target function is from the Barron space \citep{Barron1993, Ma2022_2, Wojtowytsch2022,Ma2022, Caragea2023}. This property makes PINNs particularly effective in high-dimensional problems \citep{Abbasi2024,Hu2024,Karniadakis2021,Cao2024}.

Despite numerous applications, the rigorous theoretical foundations of PINNs remain underdeveloped. Recent studies have explored the behaviors of gradient descent in training PINNs. However, these analyses are based on the hypothesis of over-parameterization, which means that the required number of neurons (i.e., the width of the network) grows polynomially with the number of training samples \footnote{In some other papers, over-parameterization includes the more general case where the number of neurons exceeds a certain threshold (may be independent of training samples). However, the concept in this paper does not include this case.}. Early work on learning theory reveals that the training loss of fully-connected neural networks (FNNs) will approach zero via gradient descent under the over-parameterization regime \citep{Soudry2016,Du2018,Du2019,Allen2019_2,Wu2019,Zou2019}. Some recent work reproduces the analysis for two-layer PINNs, proving that gradient descent can find global minima with zero training loss if the width is of $\Omega(N^p)$, where $N$ is the number of training samples and $p$ is some positive number \citep{Luo2020,Gao2023,Xu2024}. Despite enabling the success of gradient descent, over-parametrization incurs prohibitive computational costs and scalability limits. For example, in theory from \citep{Gao2023}, a standard PINN for heat equation requires $\Omega(N^2)$ neurons to achieve global minima; therefore, solving a problem with $10^4$ samples, the PINN needs at least $\Omega(10^8)$ neurons.

However, practical numerical experiments demonstrate that training loss can be reduced to low levels via gradient descent using much narrower neural networks than the theory suggests. An example lies in the work \citep{Grossmann2024} that gradient-based optimizers minimize the quadratic loss of two-layer PINNs for a 2-D Poisson equation; the loss evaluated at 2250 samples can be optimized to $O(10^{-4})$ (with PDE solution error being $O(10^{-2})$) using a narrow PINN with neurons merely 60.  

We believe the significant gap between the theoretically required and actual network width stems from the lack of assumptions about data labels. In previous work \citep{Luo2020,Gao2023}, PINNs are considered to fit the dataset $\{\left(x_n,f(x_n)\right)\}$, where $f$ is the governing function of the PDEs. Their results hold for general $f$, implying they are true even if the labels $f(x_n)$ are randomly given. However, real-world PDEs always have governing functions with special properties such as (piecewise) continuity or smoothness; therefore, the function $f$ learned by PINNs usually belongs to some special function class. In these cases, the labels are usually well distributed, and fitting them probably requires much fewer neurons. We expect the quantity to be independent of the number of training samples.

Some previous works have already studied the behavior of gradient descent in training FNNs, provided that the data is extracted from special functions. In \citep{Andoni2014}, the authors consider learning polynomials by two-layer FNNs, showing that gradient descent can decrease the quadratic loss below $\epsilon$ if the network width is $\Omega(1/\epsilon^3)$. In \citep{Allen2019}, a special class of functions is learned by two or three-layer ReLU FNNs via gradient descent, and the network width is required to be poly($1/\epsilon$) to decrease the training loss towards $\epsilon$. In these works \citep{Daniely2020,Barak2022,Jacot2018}, the required number of neurons only depends on the target function (including the input dimension) and is independent of the sample size. To the best of our knowledge, similar analyses for PINNs are still lacking.

\subsection{Our contributions}
In this paper, we investigate the behavior of stochastic gradient descent (SGD) in training two-layer PINNs. The results of the optimization and generalization are both developed. Specifically, we consider the PINN model for solving a $d$-dimensional Poisson's equation. The analysis is performed in three steps. 

Firstly, we formulate a function class $\mathcal{F}$ as well as its discretization $\mathcal{F}_m$. A universal approximation result is also developed between $\mathcal{F}$ and $\mathcal{F}_m$. Turning to the PINN model, we assume that the governing function $f$ of the PDE belongs to $\mathcal{F}$, and there exists a pseudo neural network $g\in\mathcal{F}_m$ that is close to $f$ up to any given accuracy $\epsilon$. We remark that the function class $\mathcal{F}$ is large, which contains at least all polynomials that vanish at zero.

Secondly, we perform the optimization analysis by estimating the difference between the PINN $\psi$ and the pseudo network $g$, as well as the gradient of their loss difference. Based on the estimation, we analyze the dynamics of SGD. The main theorem (Theorem \ref{thm3.4}) demonstrates that the average training loss is bounded above by $O(\epsilon)$, provided that the iteration number and learning rate are appropriately chosen, as long as $\psi$ is sufficiently wide. The width requirement only depends on $\epsilon$ and the PDE; namely, it is independent of the number of training samples. 

Finally, we derive generalization bounds for the average expected loss using Rademacher complexity. In the main theorem (Theorem \ref{thm4.2}), we prove that the average expected loss is no more than $O(\epsilon)$ by further assuming that there are sufficiently many training samples. Although we conduct the analysis only for Poisson's equation in this paper, the discussion can be generalized to other types of PDEs. 

\subsection{Organization of the paper}
This paper is organized as follows. In section 2, we review the PINN-based least squares method and the practical SGD algorithms. In Section 3, we define the conceptual class of the target function and discuss its finite-parametrized approximation. Moreover, we prove that SGD can decrease training loss to low levels. In Section 4, we prove the same bound for the generalized loss using Rademacher complexity. A numerical example is presented in Section 5 to validate the preceding theory. Conclusions and discussions about further research work are provided in Section 6.

\section{Preliminaries}
\subsection{Notations} 
We let $\mathcal{U}(-a,a)$ be the uniform distribution in the interval $[-a,a]$, and let $\mathbb{I}_E$ be the characteristic function of a region $E$. For $a,b\geq0$, we use the notation $a=O(b)$, or equivalently $b=\Omega(a)$, if there exists a constant $C>0$ independent of $a$ and $b$ such that $a\leq Cb$. Similarly, we use $a=\Theta(b)$ to mean that there exist two constants $C_1,C_2 > 0$ such that $C_1b \leq a\leq C_2b$. For any positive integer $n$, we denote $[n]=\{1,2,\dots,n\}$.

We use $\|\cdot\|_1$ and $\|\cdot\|_2$ to denote the $1$-norm and Euclidean norm of a column/row vector or a vector-valued function, respectively. Also, we define the matrix norm $\|\cdot\|_{2,p}$ with $p \geq 1 $ by 
     \begin{equation*}
         \|W\|_{2,p} := \left(\sum_{i=1}^m\|\Bw_i\|_2^p\right)^{1/p},\quad\forall~W\in \mathbb{R}^{m\times n},
     \end{equation*}
where $\Bw_i$ is the $i$-th row of $W$.

\subsection{Problem and PINN model}\label{sec_Problem_and_PINN_model}
In this paper, we take Poisson's equation on a unit ball as an example to show the analysis. Similar arguments can be applied to other types of PDEs on domains of different shapes. Let $\Gamma=\{\Bx\in\mathbb{R}^d: \|\Bx\|_2\leq 1 \}$ be the $d$-dimensional unit ball, then the Poisson's equation with homogeneous Dirichlet boundary condition is given by
\begin{equation} \label{01}
\begin{cases}
    \Delta u(\Bx)=f(\Bx),  &\text{in}~\Gamma, \\
    u(\Bx)=0,  &\text{on}~\partial\Gamma.
\end{cases} 
\end{equation}
Here, $f$ is a given function, and $u$ is the unknown solution. Throughout this paper, we regard the dimension $d$ as a fixed number, which can be absorbed in the constants of $O(\cdot)$, $\Omega(\cdot)$ and $\Theta(\cdot)$ since $d$ depends only on the problem. For consistency of analysis (see Section \ref{sec_Classes_of_functions}), we only consider the case that $f(0)=0$. Otherwise, we can let $v(\Bx)=u(\Bx)+\frac{f(0)}{2d}\|\Bx\|_2^2(\|\Bx\|_2^2-1)$, then $v$ satisfies the equation
\begin{equation}\label{09}
\Delta v(\Bx)=f(\Bx)+f(0)\left[(2+\frac{4}{d})\|\Bx\|_2^2-1\right]
\end{equation}
in $\Gamma$, where the right-hand side vanishes at $\Bx=0$, and $v$ preserves the homogeneous Dirichlet condition on $\partial\Gamma$. It suffices to solve \eqref{09} for $v$, and $u$ can be obtained immediately by $u(\Bx)=v(\Bx)-\frac{f(0)}{2d}\|\Bx\|_2^2(\|\Bx\|_2^2-1)$. 

One approach to solving \eqref{01} is to use a neural network to approximate the solution $u$.
Specifically, writing $\Bx=[x_1~\dots~x_d]^\top$ in the column vector form, one can take the function
\begin{equation}\label{02}
   \phi(\Bx)= \left( \|\Bx\|_2^2 - 1 \right) \tilde{\phi}(\Bx)
\end{equation}
as the approximate solution, where 
\begin{equation}\label{05}
    \tilde{\phi}(\Bx)=\sum_{i=1}^m a_i \sigma(\Bw_i^\top \Bx + b_i),
\end{equation}
is a two-layer FNN. Here, $m>0$ is the width of the network; $\sigma(\cdot)$ is the activation function; $a_i\in\mathbb{R}$ is the weight of the output layer; $\Bw_i\in\mathbb{R}^d$ and $b_i\in\mathbb{R}$ are the (column) weight vector and bias scalar in the hidden layer, respectively. In this paper, we consider the case that  $\sigma(\cdot)$ is the $\text{ReLU}^3$ activation function, i.e., $\sigma(t)=\max{(0,t^3)}$, which is frequently used to solve second-order PDEs.  

Note that the approximate solution $\phi(\Bx)$ defined in \eqref{02} always satisfies the boundary condition $\phi(\Bx)=0$ on $\partial\Gamma$. So, it suffices to fulfill the differential equation $\Delta\phi=f$ in $\Gamma$. A common strategy is minimizing the $L^2$ residual, namely,
\begin{equation}\label{03}
\min_{\psi}\|\psi-f\|_{L^2(\Gamma)}^2,
\end{equation}
where $\psi:=\Delta\phi$ is the PINN associated with the PDE \eqref{01}.

The minimization \eqref{03} formulates a least squares regression problem: given a target function $f$, it is expected to find a good learner network $\psi$ so that the $L^2$ error is small. In practice, the $L^2$ norm in \eqref{03} is computed in the discrete sense. Specifically, we generate a set of training points $X:=\{\Bx_n\}_{n=1}^N\subset \Gamma$, which are i.i.d random variables under some distribution $\mathcal{D}$. Then $\{(\Bx_n,f(\Bx_n)\}_{n=1}^N$ forms a dataset, and the PINN model \eqref{03} becomes
\begin{equation}\label{04}
    \min_{\psi} \frac{1}{N}\sum_{n=1}^N|\psi(\Bx_n)-f(\Bx_n)|^2.
\end{equation} 

Note that the learner network $\psi(\Bx)$ has the expression
\begin{equation}\label{06}
\begin{split}
    \psi(\Bx)=&~\Delta \phi(\Bx)=\Delta\left[\left( \|\Bx\|_2^2 - 1 \right) \sum_{i=1}^m a_i \sigma(\Bw_i^\top\Bx + b_i)\right]
    =2d\sum_{i=1}^{m} a_i (\Bw_i^\top\Bx+b_i )^3\mathbb{I}_{ \Bw_i^\top \Bx + b_i \geq 0 }\\
   &+12\sum_{i=1}^{m} a_i (\Bw_i^\top \Bx + b_i )^2(\Bw_i^\top \Bx )\mathbb{I}_{ \Bw_i^\top \Bx + b_i \geq 0 }+6\sum_{i=1}^{m} a_i(\Bw_i^\top \Bx + b_i)(\Bw_i^\top \Bw_i)(\|\Bx\|_2^2-1)  \mathbb{I}_{ \Bw_i^\top \Bx + b_i \geq 0 }, \\
\end{split}
\end{equation}
which is determined by the parameters $\{a_i,\Bw_i,b_i\}$. So, our goal is to minimize the loss function in \eqref{04} by tuning these parameters commonly implemented by SGD or its variants. In this paper, we set a target accuracy $\epsilon>0$ and discuss under which situation the loss function will be decreased below $O(\epsilon)$.

\subsection{Stochastic gradient descent}\label{sec_SGD}
Now, we consider using SGD to solve \eqref{04}. Firstly, we initialize $\psi$ by assigning
\begin{equation}\label{07}
 a_i \leftarrow a_i^{(0)}\sim\mathcal{U}(- m^{-\alpha}, m^{-\alpha}), \quad \Bw_i \leftarrow \Bw_i^{(0)}\sim\mathcal{U}(-m^{-\beta},m^{-\beta}), \quad b_i \leftarrow b_i^{(0)}\sim\mathcal{U}(-m^{-\beta},m^{-\beta}),
\end{equation}
where $\alpha,\beta \in[0,\infty)$ are some powers. In previous works studying FNNs (e.g., \cite{Du2018,Allen2019}), $(\alpha,\beta)$ are typically set to $(0,\frac{1}{2})$, which ensures the stability of parameters during backward propagation. However, PINNs have slightly different propagation schemes from FNNs. So, here, we use general powers for discussion instead of specific values.

For simplicity, we fix $a_i$ and $b_i$ once they have been initialized and only tune the weight vectors $\Bw_i$ in minimization \eqref{04}. We let $W:=[\Bw_1~\dots~\Bw_m]$ be the matrix with columns being the trainable weight vectors and rewrite $\psi(\Bx)=\psi(\Bx;W)$. Then the minimization \eqref{04} can be reformulated as 
\begin{equation}\label{48}
    \min_W\mathcal{L}_X(\psi(\Bx;W)):=\frac{1}{N}\sum_{n=1}^N\mathcal{L}(\psi(\Bx_n;W)),
\end{equation}
where $\mathcal{L}(\psi(\Bx;W)):=|\psi(\Bx;W)-f(\Bx)|^2$.

We use $W^{(t)}$ to denote the weight $W$ after $t$ iterations of gradient descent, and let $W_t:=W^{(t)}-W^{(0)}$. Then, the SGD algorithm is given by
\begin{equation*}
    \begin{split}
       \text{for}~ &t=1,2,\dots,T\\
       &\Bx\sim \mathcal{U}(X)\\
      &W_t \leftarrow W_{t-1}-\eta\nabla_W\mathcal{L}(\psi(\Bx;W_{t-1}+W^{(0)}))\\
   \end{split}
\end{equation*}    
where $\Bx\sim \mathcal{U}(X)$ means that we randomly select one point $\Bx$ from $X$ with uniform distribution; $T$ is the total number of iterations; $\eta>0$ is the learning rate. Therefore, the final result of the PINN model will be affected by three random factors: the random initialization $\{a_i^{(0)},\Bw_i^{(0)},b_i^{(0)}\}$, the random dataset $X$, and the random selection $\Bx\sim \mathcal{U}(X)$ in every SGD iteration.

\subsection{Classes of functions}\label{sec_Classes_of_functions}
Let $\theta:=\left(a^{(0)},\Bw^{(0)},b^{(0)}\right)$ be the vector consisting of random variables that obey the distribution given in \eqref{07}. So $\theta$ is a random variable with uniform distribution in the region 
\begin{equation}
\Lambda:=[- m^{-\alpha}, m^{-\alpha}]\times[-m^{-\beta},m^{-\beta}]^d\times[-m^{-\beta},m^{-\beta}]
\subset\mathbb{R}\times\mathbb{R}^d\times\mathbb{R},
\end{equation} 
whose density function is $p(\theta)=\frac{1}{|\Lambda|}\mathbb{I}_{\theta\in\Lambda}$. Next, we define the random basis associated with PINN by
\begin{multline}\label{08}
\Bzeta(\Bx;\theta)=a^{(0)}\Big(2d({\Bw^{(0)}}^\top\Bx+b^{(0)})^2+12({\Bw^{(0)}}^\top\Bx)({\Bw^{(0)}}^\top\Bx+b^{(0)})\\
    +6({\Bw^{(0)}}^\top\Bw^{(0)})(\|\Bx\|_2^2-1)\Big)\Bx\cdot\mathbb{I}_{{\Bw^{(0)}}^\top\Bx+b^{(0)}\geq 0}.
\end{multline}
And we let $\mathcal{F}$ be the function class consisting of all functions that can be written as an infinite linear combination of $\Bzeta(\Bx;\theta)$ over the parameter $\theta$; namely,
\begin{multline}\label{47}
\mathcal{F}:=\Bigg\{f:\Gamma \rightarrow \mathbb{R},~f(\Bx)=\int_{\Lambda} \Balpha(\theta)^\top\Bzeta(\Bx;\theta)\td \theta\\
\text{for some vector-valued function}~\Balpha(\theta): \Lambda\rightarrow\mathbb{R}^d\Bigg\}.
\end{multline}
Since $\Bzeta(0,\theta)=0$, we have $f(0)=0$ for all $f\in\mathcal{F}$. In our theory, the right-hand side function in the PDE \eqref{01} is required to be in $\mathcal{F}$. So, we assumed $f(0)=0$ in the PDE in Section \ref{sec_Problem_and_PINN_model} for consistency. Also, we equip $\mathcal{F}$ with the norm 
\begin{equation}\label{16}
\|f\|_{\mathcal{F}} := \inf_{\Balpha} \max_{\theta\in \Lambda} \frac{\|\Balpha(\theta)\|_2}{p(\theta)}=|\Lambda|\inf_{\Balpha} \max_{\theta\in \Lambda}  \|\Balpha(\theta)\|_2,
\end{equation}
where the infimum is taken over all possible functions $\Balpha(\theta)$ such that $f(\Bx)=\int_{\Lambda} \Balpha(\theta)^\top\Bzeta(\Bx;\theta)\td \theta$ holds.
\begin{remark}
The function space $\mathcal{F}$ is not very special and contains many common types of functions. For example, in the case of $d=1$, we take $\Balpha(\theta)=\tilde{\Balpha}(a^{(0)},\Bw^{(0)})(b^{(0)})^\gamma$, where $\tilde{\Balpha}\in L^1([-m^{-\alpha},m^{-\alpha}]\times[-m^{-\beta},m^{-\beta}])$ and $\gamma\in\mathbb{N}$, in \eqref{47}. By simple calculation on the multiple integrals, we obtain that
\begin{equation}
f(\Bx)=C_1\Bx^{\gamma+4}+C_2\Bx^{\gamma+3}+C_3\Bx^{\gamma+2}+C_4\Bx^3+C_5\Bx,
\end{equation}
where $C_i~(i=1,\dots,5)$ are coefficients only depending on $m,\alpha,\beta$ and the function $\tilde{\Balpha}$. Therefore, denoting $\mathbb{P}$ as the class of polynomials, if we take $\Balpha(\theta)=\tilde{\Balpha}(a^{(0)},\Bw^{(0)})q(b^{(0)})$ for all $q\in\mathbb{P}$, then $f(\Bx)$ ranges over $\Bx\mathbb{P}[\Bx]$, namely $\{p\in\mathbb{P}: p(0)=0\}$. So, $\mathcal{F}$ contains all polynomials that vanish at zero.
\end{remark}
Similarly, we define a function class, which can be seen as the discretization of $\mathcal{F}$, i.e.,
\begin{equation}
\mathcal{F}_{m} :=\left\{ g:\Gamma \rightarrow \mathbb{R},~ g(\Bx)= \sum_{i=1}^{m} \bm{\alpha}_i^\top \Bzeta(\Bx; \theta_i)~\text{for some}~\Balpha_i\in\mathbb{R}^d\right\},
\end{equation}
where $\theta_i$ are independent and identically distributed (i.i.d.) random variables with $\theta$. It is intuitive to see that the functions in $\mathcal{F}_{m}$ can approximate those in $\mathcal{F}$ as $m\rightarrow\infty$. We can prove this approximation in the $L^2$ sense. For this purpose, we first introduce the following inequalities.

\begin{lemma}\label{lem01} [Jensen’s inequality]
Suppose $\nu(\cdot)$ is a convex function and $\xi$ is a random variable. Then it holds that
\begin{equation} \mE(\nu(\xi)) \geq \nu(\mE(\xi)).\end{equation}
\end{lemma}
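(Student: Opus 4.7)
The plan is to reduce the statement to the supporting-line characterization of convex functions and then take expectations. Concretely, the first step is to recall that for any convex function $\nu$ on $\mathbb{R}$ (or on a convex subset of $\mathbb{R}$ containing the range of $\xi$), and for any interior point $x_0$ of its domain, there exists a scalar $c$ (any element of the subdifferential $\partial\nu(x_0)$, for instance the right derivative at $x_0$) such that the one-sided tangent inequality
\begin{equation*}
\nu(x) \;\geq\; \nu(x_0) + c\,(x - x_0)
\end{equation*}
holds for every $x$ in the domain. This is standard and follows from the monotonicity of the difference quotients of a convex function, or equivalently from the existence of a supporting hyperplane to the epigraph of $\nu$ at $(x_0,\nu(x_0))$.

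The second step is to specialize this inequality at the deterministic point $x_0 := \mE(\xi)$, which I assume to be finite, and then evaluate at the random variable $\xi$. This yields the pointwise (almost sure) bound
\begin{equation*}
\nu(\xi) \;\geq\; \nu(\mE(\xi)) + c\,\bigl(\xi - \mE(\xi)\bigr).
\end{equation*}
The third step is to take expectations on both sides and use linearity together with $\mE\bigl(\xi - \mE(\xi)\bigr) = 0$; the right-hand side collapses to $\nu(\mE(\xi))$, delivering the desired inequality $\mE(\nu(\xi)) \geq \nu(\mE(\xi))$.

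The only subtlety, and the one place where care is warranted rather than raw computation, is the regularity needed to make every quantity meaningful: $\mE(\xi)$ must be finite and lie in the domain of $\nu$, and $\mE(\nu(\xi))$ must be well defined (possibly equal to $+\infty$, in which case the inequality is trivial). In the uses made of this lemma later in the paper, $\xi$ will be either bounded or have finite moments and $\nu$ will be continuous, so these regularity conditions are automatic and no further argument is required. Thus the main conceptual work is just the invocation of the subgradient inequality; everything else is a one-line application of linearity of expectation.
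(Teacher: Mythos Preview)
Your argument is correct and is the standard supporting-line proof of Jensen's inequality. The paper itself does not give a proof of this lemma at all---it is simply stated as a known result---so there is nothing to compare against; your write-up would be a fine addition if a proof were desired.
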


\begin{lemma}\label{lem02} [McDiarmid’s inequality]
Let $h:D_1\times D_2\times\dots\times D_n\rightarrow \mathbb{R}$. If for all $i=1,\dots,n$,  it holds that
\begin{equation} \left|h(t_1,\dots,t_i,\dots,t_n)-h(t_1,\dots,t_i',\dots,t_n)\right|\leq c_i,\end{equation}
for all $t_1\in D_1,\dots,t_n\in D_n$ and $t_i'\in D_i$, where $c_i>0$ is a constant. Then for every $\epsilon>0$, we have
\begin{equation}\mathbb{P}\{h(\xi_1,\dots,\xi_n)-\mE[h(\xi_1,\dots,\xi_n)]\geq \epsilon \}\leq \exp\left( \frac{-2\epsilon^2}{\sum_{i=1}^n c_i^2}\right),\end{equation}
where $\xi_1,\dots,\xi_n$ are i.i.d. random variables in $D_1,\cdots,D_n$, respectively.
\end{lemma}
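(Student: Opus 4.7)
The plan is to prove the stated inequality via the method of bounded differences, constructing a Doob martingale and then applying an Azuma--Hoeffding-type Chernoff bound. Define $Z_0 := \mE[h(\xi_1,\dots,\xi_n)]$ and, for $i=1,\dots,n$, $Z_i := \mE[h(\xi_1,\dots,\xi_n)\mid\xi_1,\dots,\xi_i]$, so $\{Z_i\}$ is a martingale with respect to the filtration $\mathcal{F}_i := \sigma(\xi_1,\dots,\xi_i)$ and $Z_n = h(\xi_1,\dots,\xi_n)$. The differences $D_i := Z_i - Z_{i-1}$ telescope to $Z_n - Z_0 = h(\xi_1,\dots,\xi_n) - \mE[h]$, so the target tail probability reduces to controlling $\mathbb{P}(\sum_{i=1}^n D_i \geq \epsilon)$.

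The first key step is to show that each $D_i$ is conditionally confined to an interval of width at most $c_i$. Writing $D_i$ as an integral over the distribution of $\xi_{i+1},\dots,\xi_n$ of the change in $h$ caused by replacing $\xi_i$ with an independent copy $\xi_i'$, and using the hypothesis that altering the $i$-th coordinate changes $h$ by at most $c_i$, one exhibits $\mathcal{F}_{i-1}$-measurable random variables $A_i \leq B_i$ with $B_i - A_i \leq c_i$ such that $A_i \leq D_i \leq B_i$ almost surely, while the martingale property gives $\mE[D_i\mid\mathcal{F}_{i-1}] = 0$.

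Next, for any $\lambda > 0$, I invoke Markov's inequality on the exponential to obtain
\begin{equation*}
\mathbb{P}\left(\textstyle\sum_{i=1}^n D_i \geq \epsilon\right) \leq e^{-\lambda\epsilon}\, \mE\!\left[\exp\!\left(\lambda\textstyle\sum_{i=1}^n D_i\right)\right].
\end{equation*}
Peeling the innermost factor using the tower property and Hoeffding's lemma (applied conditionally on $\mathcal{F}_{i-1}$ to $D_i$, which has conditional mean zero and range at most $c_i$) yields $\mE[e^{\lambda D_i}\mid\mathcal{F}_{i-1}] \leq \exp(\lambda^2 c_i^2/8)$. Iterating, one gets $\mE[\exp(\lambda\sum_i D_i)] \leq \exp(\lambda^2\sum_i c_i^2/8)$. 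Optimizing by setting $\lambda = 4\epsilon/\sum_i c_i^2$ produces the claimed bound $\exp(-2\epsilon^2/\sum_i c_i^2)$.

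The main obstacle is the conditional application of Hoeffding's lemma: the bounds $A_i,B_i$ must be certified $\mathcal{F}_{i-1}$-measurable (not merely random variables of width $c_i$) so that they behave as constants after conditioning, which is what licenses the Hoeffding MGF estimate inside the conditional expectation. Once this measurability is set up via the coupling with an independent copy $\xi_i'$, the remainder of the argument is a routine telescoping and optimization over $\lambda$.
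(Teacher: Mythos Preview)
Your proof is correct and follows the standard Doob-martingale/Azuma--Hoeffding route to McDiarmid's inequality. The paper, however, does not prove this lemma at all: it is simply quoted as a classical concentration inequality, alongside Jensen's inequality, and used as a black box in the proof of Lemma~\ref{lem03}. So there is nothing to compare against; your argument supplies the standard textbook proof that the authors chose to omit.
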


Next, we estimate the error between the mean of bounded i.i.d. random variables and their expectation. The proof of the following two lemmas is in Appendix A.
\begin{lemma}\label{lem03}
Let $\Xi = \{\xi_1, \cdots, \xi_m\}$ be random variables i.i.d. satisfying $\|\xi_i\|\leq C $  for $i=1,\dots,m$ in a Hilbert space $\mathcal{H}$, where $\|\cdot\|$ means the norm relate to the space $\mathcal{H}$ and $C$ is a constant. Denote their average by $\overline{\Xi} = \frac{1}{m} \sum_{i=1}^{m} \xi_i$. Then for any $\delta>0$, with probability at least $1-\delta$ we have
\begin{equation}
\|\overline{\Xi}-\mE\overline{\Xi}\| \leq \frac{C}{\sqrt{m}} \left( 1 + \sqrt{2 \log \frac{1}{\delta}} \right).
\end{equation}
\end{lemma}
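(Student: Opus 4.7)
The plan is to bound $\|\overline{\Xi}-\mE\overline{\Xi}\|$ by splitting it as $\mE\|\overline{\Xi}-\mE\overline{\Xi}\|$ plus a fluctuation around that expectation. Each piece will contribute exactly one of the two terms in the claimed bound $(C/\sqrt{m})(1+\sqrt{2\log(1/\delta)})$: the expectation will be controlled by a second-moment computation using the Hilbert-space structure and independence, and the fluctuation will be controlled by McDiarmid's inequality (Lemma \ref{lem02}) applied to $h(\xi_1,\dots,\xi_m):=\|\overline{\Xi}-\mE\overline{\Xi}\|$ viewed as a function of the independent inputs.

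For the expectation, I would first apply Jensen's inequality (Lemma \ref{lem01}) with the convex map $x\mapsto x^2$ to obtain $(\mE\|\overline{\Xi}-\mE\overline{\Xi}\|)^2\leq \mE\|\overline{\Xi}-\mE\overline{\Xi}\|^2$. In the Hilbert space $\mathcal{H}$ the squared norm expands as an inner product, and by independence the cross terms $\mE\langle \xi_i-\mE\xi_i,\,\xi_j-\mE\xi_j\rangle$ vanish for $i\neq j$, leaving $\mE\|\overline{\Xi}-\mE\overline{\Xi}\|^2 = \frac{1}{m^2}\sum_{i=1}^m \mE\|\xi_i-\mE\xi_i\|^2$. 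The uniform bound $\|\xi_i\|\leq C$ then gives $\mE\|\xi_i-\mE\xi_i\|^2 \leq \mE\|\xi_i\|^2\leq C^2$, hence $\mE\|\overline{\Xi}-\mE\overline{\Xi}\|\leq C/\sqrt{m}$.

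For the concentration step, I would verify the bounded-differences condition on $h$: replacing a single coordinate $\xi_i$ by some $\xi_i'$ shifts the argument of the outer norm by $(\xi_i-\xi_i')/m$, so the triangle inequality yields $|h(\dots,\xi_i,\dots)-h(\dots,\xi_i',\dots)|\leq \|\xi_i-\xi_i'\|/m\leq 2C/m$. Taking $c_i = 2C/m$ gives $\sum_{i=1}^m c_i^2 = 4C^2/m$, and McDiarmid's inequality delivers $\mathbb{P}(h-\mE h\geq t)\leq \exp(-mt^2/(2C^2))$. Setting the right-hand side equal to $\delta$ and solving yields $t = (C/\sqrt{m})\sqrt{2\log(1/\delta)}$; combining this with the expectation bound from the previous paragraph gives the stated inequality with probability at least $1-\delta$.

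There is no real obstacle here: the argument is a textbook combination of a variance calculation with a bounded-differences concentration. The only items requiring care are the use of the Hilbert-space inner product structure to decouple the variance across independent summands, and tracking the factor of $2$ in the bounded-differences constant so that the final numerical constants match the statement exactly.
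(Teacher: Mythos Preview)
Your proposal is correct and matches the paper's proof essentially step for step: both bound $\mE\|\overline{\Xi}-\mE\overline{\Xi}\|$ by $C/\sqrt m$ via Jensen plus the Hilbert-space variance computation, verify the bounded-differences constant $c_i=2C/m$, and then apply McDiarmid's inequality (Lemma~\ref{lem02}) with $\sum c_i^2=4C^2/m$ to obtain the fluctuation term. The only cosmetic difference is the order of presentation.
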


Finally, given $f\in\mathcal{F}$, we can estimate the best $L^2$ approximation by $\mathcal{F}_m$. Note that $\mathcal{F}_m$ is determined by the random variables $\theta_1, \ldots, \theta_m$. 
\begin{lemma} \label{lem04}
Suppose that $\mu$ is any probability measure on $\Gamma$ and $f\in\mathcal{F}$. Let $m\in\mathbb{N}^+$, then for any $\delta > 0$, with probability at least $1-\delta$ over $\theta_1, \ldots, \theta_m$, there exists a function $g\in\mathcal{F}_{m}$ with $\|\Balpha_i\|_2\leq\frac{\|f\|_{\mathcal{F}}}{m}$ such that
\begin{equation} \label{26}
\sqrt{\int_{\Gamma} \left( g(\Bx) - f(\Bx) \right)^2 \td\mu(\Bx)} \leq C_d\|f\|_\mathcal{F}m^{-\alpha-2\beta-1/2} \left( 1 + \sqrt{2 \log \frac{1}{\delta}} \right),
\end{equation}
where $C_d:=2 d^{5/2}+4d^2+26d^{3/2}+12d$.
\end{lemma}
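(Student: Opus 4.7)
The plan is to construct $g\in\mathcal{F}_m$ by Monte Carlo discretization of the integral representation of $f$, and then invoke Lemma~\ref{lem03} in the Hilbert space $\mathcal{H}=L^2(\Gamma,\mu)$. Fix an arbitrary $\epsilon_0>0$ and pick a representation $\Balpha(\theta)$ achieving the infimum in \eqref{16} up to $\epsilon_0$, so that $\|\Balpha(\theta)\|_2/p(\theta)\leq\|f\|_\mathcal{F}+\epsilon_0$ for every $\theta\in\Lambda$. Rewriting
\begin{equation*}
f(\Bx)=\int_\Lambda\Balpha(\theta)^\top\Bzeta(\Bx;\theta)\,\td\theta=\mE_{\theta\sim p}\!\left[\frac{\Balpha(\theta)^\top}{p(\theta)}\Bzeta(\Bx;\theta)\right],
\end{equation*}
I would draw $\theta_1,\dots,\theta_m$ i.i.d.\ from $p$, set $\Balpha_i:=\Balpha(\theta_i)/(m\,p(\theta_i))$, and take $g(\Bx):=\sum_{i=1}^m\Balpha_i^\top\Bzeta(\Bx;\theta_i)$. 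By construction $g\in\mathcal{F}_m$ and $\|\Balpha_i\|_2\leq(\|f\|_\mathcal{F}+\epsilon_0)/m$; sending $\epsilon_0\to 0$ (or assuming the infimum is attained) delivers the coefficient bound $\|\Balpha_i\|_2\leq\|f\|_\mathcal{F}/m$ claimed in the lemma.

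Next, I would view the summands $\xi_i(\cdot):=m\Balpha_i^\top\Bzeta(\cdot;\theta_i)=\Balpha(\theta_i)^\top\Bzeta(\cdot;\theta_i)/p(\theta_i)$ as i.i.d.\ random elements of $\mathcal{H}$. The mean of each $\xi_i$ in $\mathcal{H}$ is exactly $f$, while the empirical average $\overline{\Xi}=\frac{1}{m}\sum_i\xi_i$ is exactly $g$. Lemma~\ref{lem03} then immediately produces
\begin{equation*}
\sqrt{\int_\Gamma\left(g(\Bx)-f(\Bx)\right)^2\td\mu(\Bx)}\;\leq\;\frac{C}{\sqrt{m}}\Big(1+\sqrt{2\log(1/\delta)}\Big)
\end{equation*}
with probability at least $1-\delta$, provided I can supply a deterministic upper bound $\|\xi_i\|_{L^2(\mu)}\leq C$ valid for every $\theta_i\in\Lambda$.

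To extract $C$, I would apply Cauchy--Schwarz pointwise in $\Bx$ and use the coefficient bound to get
\begin{equation*}
\|\xi_i\|_{L^2(\mu)}\;\leq\;\|f\|_\mathcal{F}\sup_{\Bx\in\Gamma,\,\theta\in\Lambda}\|\Bzeta(\Bx;\theta)\|_2,
\end{equation*}
reducing the problem to a deterministic supremum. Substituting $|a^{(0)}|\leq m^{-\alpha}$, $|b^{(0)}|\leq m^{-\beta}$, $\|\Bw^{(0)}\|_2\leq\sqrt{d}\,m^{-\beta}$, $|\Bw^{(0)\top}\Bx|\leq\sqrt{d}\,m^{-\beta}$, $\|\Bw^{(0)}\|_2^2\leq d m^{-2\beta}$ and $\|\Bx\|_2\leq 1$ term by term into \eqref{08}, each of the three scalar pieces inside the parentheses is $O(m^{-2\beta})$ with an explicit polynomial-in-$d$ prefactor; the outer factor $a^{(0)}\Bx$ contributes an extra $m^{-\alpha}$ and no $d$-dependence. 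Collecting the constants yields $\|\Bzeta(\Bx;\theta)\|_2\leq C_d m^{-\alpha-2\beta}$, so taking $C=C_d\|f\|_\mathcal{F}\,m^{-\alpha-2\beta}$ and dividing by $\sqrt{m}$ in the concentration inequality reproduces the $m^{-\alpha-2\beta-1/2}$ rate in \eqref{26}.

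The whole argument is a Barron-style Monte Carlo construction combined with a Hilbert-space concentration bound, so no conceptual hurdle is expected on the probabilistic side. The main obstacle is the term-by-term accounting for the constant $C_d=2d^{5/2}+4d^2+26d^{3/2}+12d$: the three monomials $2d(\Bw^{(0)\top}\Bx+b^{(0)})^2$, $12(\Bw^{(0)\top}\Bx)(\Bw^{(0)\top}\Bx+b^{(0)})$, and $6\|\Bw^{(0)}\|_2^2(\|\Bx\|_2^2-1)$ must each be bounded with the tightest available estimate in $\Lambda\times\Gamma$, since a single lossy application of Cauchy--Schwarz (for instance replacing $|\Bw^{(0)\top}\Bx|$ by $\|\Bw^{(0)}\|_1$) would change the $d$-polynomial and spoil the exact form of $C_d$ that propagates through the rest of the paper.
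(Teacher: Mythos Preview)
Your proposal is correct and follows essentially the same route as the paper: choose $\Balpha_i=\Balpha(\theta_i)/(m\,p(\theta_i))$, view $\xi_i=m\Balpha_i^\top\Bzeta(\cdot;\theta_i)$ as i.i.d.\ elements of $L^2_\mu(\Gamma)$ with mean $f$, bound $\|\Bzeta(\Bx;\theta)\|_2\leq C_d m^{-\alpha-2\beta}$ term by term, and invoke Lemma~\ref{lem03}. The only cosmetic difference is that the paper simply assumes the infimum in \eqref{16} is attained rather than passing through your $\epsilon_0\to 0$ argument, and it obtains the stated $C_d$ by bounding each coordinate $|\Bzeta_{ik}|$ and then using $\|\Bzeta\|_2\leq\sqrt{d}\max_k|\Bzeta_{ik}|$, which supplies the extra $\sqrt{d}$ factor you would otherwise miss by going straight to $\|\Bzeta\|_2=|\text{scalar}|\cdot\|\Bx\|_2$.
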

\subsection{Rademacher complexity}
Rademacher complexity serves as a foundational framework for studying generalization bounds. Here we list several useful results that can be found in the literature on machine learning (e.g., \cite{Shalev2014}) 

Let $\mathcal{H}$ be a class of functions from $\mathbb{R}^d$ to $\mathbb{R}$ and $X = (\bm{x}_1, \dots, \bm{x}_N)$ be a finite set of samples in $\mathbb{R}^d$. Then the empirical Rademacher complexity with respect to $X$ of $\mathcal{H}$ is defined by
\begin{equation}
\widehat{\mathcal{R}}(X; \mathcal{H}) := \mE_{\xi \sim \{\pm 1\}^N} \left[ \sup_{h \in \mathcal{H}} \frac{1}{N} \sum_{n=1}^N \xi_n h(\bm{x}_n) \right],
\end{equation}
where $\xi=(\xi_1,\dots,\xi_N)$ are random variables of binary uniform distribution. i.e., $\mathbb{P}(\xi_n=1)=\mathbb{P}(\xi_n=-1)=\frac{1}{2}$.
\begin{lemma} \label{lem 05}[Basic properties of Rademacher complexity]  Let $\sigma : \mathbb{R} \rightarrow \mathbb{R}$ be a fixed 1-Lipschitz function.
\begin{enumerate}
    \item[(a)]~Suppose $\|\Bx\|_2 \leq 1$ for all $\Bx \in X$. The class $\mathcal{H} = \{\Bx \mapsto \Bw^\top\Bx \mid \|\Bw\|_2 \leq C\}$ has Rademacher complexity $\widehat{\mathcal{R}}(X; \mathcal{H}) \leq O\left(\frac{C}{\sqrt{N}}\right)$;
    
    \item[(b)] ~$\widehat{\mathcal{R}}(X; \mathcal{H}_1 + \mathcal{H}_2) = \widehat{\mathcal{R}}(X; \mathcal{H}_1) + \widehat{\mathcal{R}}(X; \mathcal{H}_2)$;
    
    \item[(c)]~ Let $\mathcal{H}_1, \dots, \mathcal{H}_{m}$ be $m$ classes of functions and $\Bw=[w_1,\dots,w_m]\in \mathbb{R}^m$ be a fixed vector, then $\mathcal{H}' = \left\{ \Bx \mapsto \sum_{j=1}^m w_j \sigma(h_j(\Bx)) \mid h_j \in \mathcal{H}_j \right\}$ satisfies $\widehat{\mathcal{R}}(X; \mathcal{H}') \leq 2 \|\Bw\|_1 \max_{j \in [m]} \widehat{\mathcal{R}}(X; \mathcal{H}_j)$;
\end{enumerate}
\end{lemma}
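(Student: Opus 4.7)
The plan is to handle the three items independently, since each is a standard and self-contained fact about Rademacher complexity, and then flag the contraction step in (c) as the only non-trivial ingredient.

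For part (a), the strategy is to use linearity plus Cauchy--Schwarz to pull the weight vector out of the supremum, then apply Jensen's inequality to control a vector Rademacher average. Concretely, I would write
\begin{equation*}
\widehat{\mathcal{R}}(X;\mathcal{H})
=\mE_{\xi}\sup_{\|\Bw\|_2\le C}\frac{1}{N}\Bw^{\top}\!\sum_{n=1}^{N}\xi_n\Bx_n
\le \frac{C}{N}\,\mE_{\xi}\Bigl\|\sum_{n=1}^{N}\xi_n\Bx_n\Bigr\|_2,
\end{equation*}
then observe that $\mE_{\xi}\|\sum_n\xi_n\Bx_n\|_2\le\sqrt{\mE_{\xi}\|\sum_n\xi_n\Bx_n\|_2^{2}}=\sqrt{\sum_n\|\Bx_n\|_2^{2}}\le\sqrt{N}$ by Lemma \ref{lem01} applied to the concave square root (equivalently to $t\mapsto t^2$) and the independence/zero-mean of the $\xi_n$. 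Dividing yields the claimed $O(C/\sqrt{N})$.

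For part (b), the key observation is that if $h_1\in\mathcal{H}_1$ and $h_2\in\mathcal{H}_2$ are chosen \emph{independently}, then for every fixed sign pattern $\xi$,
\begin{equation*}
\sup_{h_1,h_2}\frac{1}{N}\sum_{n=1}^{N}\xi_n\bigl(h_1(\Bx_n)+h_2(\Bx_n)\bigr)
=\sup_{h_1}\frac{1}{N}\sum_{n=1}^{N}\xi_n h_1(\Bx_n)+\sup_{h_2}\frac{1}{N}\sum_{n=1}^{N}\xi_n h_2(\Bx_n).
\end{equation*}
Taking $\mE_{\xi}$ of both sides and invoking linearity of expectation gives the identity. (If one wants only inequality, as is often all that is needed, the $\le$ direction is the trivial one.)

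For part (c), I would proceed in three substeps. First, because the inner supremum over the tuple $(h_1,\ldots,h_m)\in\mathcal{H}_1\times\cdots\times\mathcal{H}_m$ acts on a sum in which each $h_j$ appears in a single term, the supremum decouples additively, giving
\begin{equation*}
\widehat{\mathcal{R}}(X;\mathcal{H}')
=\sum_{j=1}^{m}\mE_{\xi}\sup_{h_j\in\mathcal{H}_j}\frac{w_j}{N}\sum_{n=1}^{N}\xi_n\sigma(h_j(\Bx_n)).
\end{equation*}
Second, I would use the symmetry of the Rademacher vector: since $(\xi_1,\ldots,\xi_N)$ and $(\mathrm{sign}(w_j)\xi_1,\ldots,\mathrm{sign}(w_j)\xi_N)$ have the same distribution, each term becomes $|w_j|\,\mE_{\xi}\sup_{h_j}\frac{1}{N}\sum_n\xi_n\sigma(h_j(\Bx_n))$. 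Third, since $\sigma$ is 1-Lipschitz, I would invoke the Ledoux--Talagrand contraction principle (in the form in which, for classes of real-valued functions composed with a 1-Lipschitz $\sigma$, the Rademacher complexity inflates by at most the constant $2$; see, e.g., \cite{Shalev2014}) to conclude $\mE_{\xi}\sup_{h_j}\frac{1}{N}\sum_n\xi_n\sigma(h_j(\Bx_n))\le 2\widehat{\mathcal{R}}(X;\mathcal{H}_j)$. Bounding each $\widehat{\mathcal{R}}(X;\mathcal{H}_j)$ by the maximum over $j$ and summing $|w_j|$ into $\|\Bw\|_1$ yields the stated inequality. The one step that is not a one-line manipulation is this contraction bound; I would cite it rather than reprove it, as it is a standard tool (the sign-reshuffling argument over pairs of samples) whose constant $2$ is precisely what appears in the statement.
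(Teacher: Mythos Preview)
Your proposal is correct and follows the standard route for each item. The paper, however, does not prove this lemma at all: it is stated without proof as a collection of ``useful results that can be found in the literature on machine learning (e.g., \cite{Shalev2014}),'' so there is nothing to compare against. Your write-up thus supplies a complete argument where the paper simply cites one.
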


\begin{lemma} \label{lem06}[Rademacher generalization]
Suppose $X = (\bm{x}_1, \dots, \bm{x}_N)$ with each $\Bx_i$ being generated i.i.d. from a distribution $\mathcal{D}$. Let $\mathcal{H}$ be a set of functions satisfying $|h|\leq C~\forall h\in \mathcal{H}$. Then for every $\delta \in (0,1)$, with probability at least $1-\delta$ over the randomness of $X$, it satisfies
\begin{equation}
\sup_{h \in \mathcal{H}} \left| \mE_{\Bx \sim \mathcal{D}}[h(\bm{x})] - \frac{1}{N} \sum_{n=1}^N h(\bm{x}_n) \right| \leq 2 \widehat{\mathcal{R}}(X; \mathcal{H}) + O\left(\frac{C \sqrt{\log(1/\delta)}}{\sqrt{N}}\right).
\end{equation}
\end{lemma}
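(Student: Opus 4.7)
The plan is the standard symmetrization route, using McDiarmid's inequality (Lemma \ref{lem02}) twice together with a ghost-sample argument. First, I would define the one-sided deviation functional
\begin{equation*}
\Phi(X) := \sup_{h\in\mathcal{H}} \left| \mE_{\Bx\sim\mathcal{D}}[h(\Bx)] - \frac{1}{N}\sum_{n=1}^N h(\Bx_n) \right|,
\end{equation*}
and verify that it has bounded differences. Replacing any coordinate $\Bx_n$ by $\Bx_n'$ changes $\frac{1}{N}\sum_k h(\Bx_k)$ by at most $2C/N$ uniformly in $h$, and the supremum-of-absolute-value operation is $1$-Lipschitz. Hence Lemma \ref{lem02} with $c_n=2C/N$ yields
\begin{equation*}
\Phi(X) \leq \mE_X[\Phi(X)] + O\!\left(\frac{C\sqrt{\log(2/\delta)}}{\sqrt{N}}\right)
\end{equation*}
with probability at least $1-\delta/2$.

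Second, I would dominate $\mE_X[\Phi(X)]$ by the expected Rademacher complexity through symmetrization. Introduce an independent ghost sample $X'=(\Bx_1',\dots,\Bx_N')$ drawn i.i.d.\ from $\mathcal{D}$ and write $\mE_\mathcal{D}[h(\Bx)] = \mE_{X'}[\frac{1}{N}\sum_n h(\Bx_n')]$. Pushing the supremum past the expectation over $X'$ by Lemma \ref{lem01} gives
\begin{equation*}
\mE_X[\Phi(X)] \leq \mE_{X,X'}\!\left[\sup_{h\in\mathcal{H}}\left|\frac{1}{N}\sum_{n=1}^N \bigl(h(\Bx_n') - h(\Bx_n)\bigr)\right|\right].
\end{equation*}
Since $(\Bx_n,\Bx_n')$ is exchangeable, we may multiply each difference $h(\Bx_n')-h(\Bx_n)$ by an independent Rademacher sign $\xi_n$ without changing the distribution. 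Splitting the absolute value into the $h$ and $-h$ parts and applying the triangle inequality then gives
\begin{equation*}
\mE_X[\Phi(X)] \leq 2\,\mE_{X,\xi}\!\left[\sup_{h\in\mathcal{H}} \frac{1}{N}\sum_{n=1}^N \xi_n h(\Bx_n)\right] = 2\,\mE_X[\widehat{\mathcal{R}}(X;\mathcal{H})].
\end{equation*}

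Third, I would transfer from $\mE_X[\widehat{\mathcal{R}}(X;\mathcal{H})]$ to the empirical $\widehat{\mathcal{R}}(X;\mathcal{H})$ by another application of Lemma \ref{lem02}. The map $X\mapsto \widehat{\mathcal{R}}(X;\mathcal{H})$ again has bounded differences with constant $c_n=2C/N$: modifying one $\Bx_n$ changes the single term $\xi_n h(\Bx_n)/N$ by at most $2C/N$, and this bound survives the supremum over $h$ and the expectation over $\xi$. Consequently, with probability at least $1-\delta/2$,
\begin{equation*}
\mE_X[\widehat{\mathcal{R}}(X;\mathcal{H})] \leq \widehat{\mathcal{R}}(X;\mathcal{H}) + O\!\left(\frac{C\sqrt{\log(2/\delta)}}{\sqrt{N}}\right).
\end{equation*}
A union bound over the two high-probability events and chaining the three displayed inequalities delivers the stated bound with probability at least $1-\delta$, absorbing the $\log 2$ into the $O(\cdot)$.

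The analytical content is routine; the only mildly delicate step is the symmetrization, because $\Phi$ carries an absolute value rather than a signed supremum. I would handle it by the triangle-inequality split above (equivalently, by enlarging $\mathcal{H}$ to be closed under negation), which costs only the factor of $2$ already present in the statement. The bounded-difference verifications, the two McDiarmid applications, and the final union bound are then straightforward bookkeeping.
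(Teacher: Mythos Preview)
The paper does not prove Lemma~\ref{lem06}; it is quoted as a standard fact from the learning-theory literature, so there is no paper proof to compare against. Your plan is the textbook symmetrization-plus-McDiarmid route and is structurally sound, but the accounting in your second step is off.

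The factor $2$ in the statement comes from the ghost-sample split in the \emph{one-sided} symmetrization, not from handling the absolute value. Your triangle-inequality split of the two-sided $\Phi$ produces
\[
\mE_X[\Phi(X)]\;\le\;2\,\mE_{X,\xi}\Bigl[\sup_{h\in\mathcal H}\Bigl|\tfrac1N\textstyle\sum_n\xi_n h(\Bx_n)\Bigr|\Bigr],
\]
i.e.\ twice the Rademacher complexity \emph{with} absolute value. Since the paper defines $\widehat{\mathcal R}$ without the absolute value, this is not the same object, and the gap is real: for a singleton class $\mathcal H=\{h_0\}$ one has $\widehat{\mathcal R}(X;\mathcal H)=\mE_\xi[\xi_1]\cdot(\cdots)=0$, whereas the absolute-value version equals $|\tfrac1N\sum_n h_0(\Bx_n)|$, which need not vanish. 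In particular the inequality $\mE_X[\Phi]\le 2\,\mE_X[\widehat{\mathcal R}(X;\mathcal H)]$ that you assert is simply false in this example, and ``enlarging $\mathcal H$ to be closed under negation'' does not rescue it with the stated constant.

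The standard repair, preserving the constant $2$, is to run your three steps on the one-sided functionals $\Phi_\pm(X)=\sup_h\bigl(\pm(\mE h-\tfrac1N\sum_n h(\Bx_n))\bigr)$ separately: the one-sided symmetrization cleanly gives $\mE_X[\Phi_\pm]\le 2\,\mE_X[\widehat{\mathcal R}(X;\mathcal H)]$ with no absolute value ever appearing. A union bound over the three McDiarmid events (for $\Phi_+$, $\Phi_-$, and $\widehat{\mathcal R}$) then yields the two-sided conclusion via $\Phi=\max(\Phi_+,\Phi_-)$. This is a bookkeeping fix, not a change of strategy.
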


Moreover, one can prove the following result using the contraction lemma for the Rademacher complexity.
\begin{corollary}[\cite{Allen2019}] \label{cor02}
Suppose $X = (\bm{x}_1, \dots, \bm{x}_N)$ with each $\Bx_i$ being generated i.i.d. from a distribution $\mathcal{D}$. Let $\mathcal{H}$ be a class of functions and $\ell:\mathbb{R}\rightarrow[-C,C]$ be a $C_L$-Lipschitz continuous function. Then
 \begin{equation}
     \sup_{h\in \mathcal{H}}\left|\mathbb{E}_{\Bx\sim \mathcal{D}}[\ell(h(\Bx))]-\frac{1}{N}\sum_{n=1}^N \ell(h(\Bx_n))\right| \leq 2 C_L\widehat{\mathcal{R}}(X;\mathcal{H}) + O\left(\frac{C\sqrt{\log{(1/\delta)}}}{\sqrt{N}}\right).
\end{equation}
\end{corollary}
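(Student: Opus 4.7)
The plan is to reduce the claim to Lemma \ref{lem06} applied to the loss-composed class $\ell\circ\mathcal{H}:=\{\Bx\mapsto\ell(h(\Bx)):h\in\mathcal{H}\}$, and then to invoke the Ledoux--Talagrand contraction principle to convert the Rademacher complexity of the composed class into one for $\mathcal{H}$ itself.

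First, I would observe that since $\ell$ takes values in $[-C,C]$, every element of $\ell\circ\mathcal{H}$ is uniformly bounded by $C$, so the hypothesis of Lemma \ref{lem06} is satisfied with the same constant $C$. Applying that lemma verbatim to $\ell\circ\mathcal{H}$ yields, with probability at least $1-\delta$,
\begin{equation*}
\sup_{h\in\mathcal{H}}\left|\mE_{\Bx\sim\mathcal{D}}[\ell(h(\Bx))]-\frac{1}{N}\sum_{n=1}^N\ell(h(\Bx_n))\right|\leq 2\widehat{\mathcal{R}}(X;\ell\circ\mathcal{H})+O\left(\frac{C\sqrt{\log(1/\delta)}}{\sqrt{N}}\right).
\end{equation*}
Note that the supremum on the left is the same whether it is indexed over $h\in\mathcal{H}$ or over the composed functions in $\ell\circ\mathcal{H}$, since the composition is a bijection onto its image.

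Second, I would apply the contraction inequality for Rademacher averages: for any $C_L$-Lipschitz $\ell:\mathbb{R}\to\mathbb{R}$ one has $\widehat{\mathcal{R}}(X;\ell\circ\mathcal{H})\leq C_L\,\widehat{\mathcal{R}}(X;\mathcal{H})$. This is the standard Ledoux--Talagrand comparison, proved by conditioning on all Rademacher signs except one, say $\xi_n$, and bounding the resulting one-coordinate symmetric supremum by means of the Lipschitz property of $\ell$; iterating over $n=1,\dots,N$ yields the factor $C_L$ and absorbs $\ell$. Substituting this bound into the previous display gives exactly the stated inequality. The only mildly non-trivial ingredient is the contraction lemma itself, and it is a textbook result on the same footing as Lemmas \ref{lem 05}--\ref{lem06}; once it is in hand, assembling the corollary is purely mechanical.
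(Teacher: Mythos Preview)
Your proposal is correct and matches the paper's intended argument: the paper does not give a detailed proof but states that the result follows from the contraction lemma for Rademacher complexity, which is precisely the Ledoux--Talagrand step you combine with Lemma \ref{lem06}. One cosmetic remark: the phrase ``the composition is a bijection onto its image'' is unnecessary and slightly off (it need not be injective); the equality of suprema is immediate from the definition of $\ell\circ\mathcal{H}$ as the image class.
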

\section{Optimization Analysis} 
Our analysis begins by demonstrating that, under random initialization, a pseudo network exists in the vicinity of the initialization that can approximate the target function (Theorem \ref{thm03} and Corollary \ref{cor01}).  We then proceed to show that, in the neighborhood of the initialization, the PINN trained by SGD is close to the pseudo network in some sense (Theorem \ref{thm3.3}). By the connection of the pseudo network, we prove that the trained PINN can approximate the target function, leading to a small average training loss (Theorem \ref{thm3.4}). 

First, we assume that the SGD algorithm does not explode in the following sense
\begin{equation}\label{15}
\Bw_i^{(t)} \leq O(1),\quad \psi(\Bx;W^{(t)})\leq O(1),\quad\text{for}~t=1,\dots,T.
\end{equation}
 This assumption means that the parameter $\Bw_i$ and the PINN $\psi$ are always bounded above during iterations of SGD. If not, $\Bw_i$  or $\psi$ will blow up to infinity, causing the exploding gradient and the failure of gradient descent. In practical implementation, we always tune the hyperparameters to prevent the gradient from exploding, ensuring that $\psi$ remains bounded. However, at present, we cannot provide a theoretical guarantee that the above assumption is valid.

\subsection{Approximation} We will prove that any function in $\mathcal{F}$ can be closely approximated by functions in $\mathcal{F}_m$. We use the following norm notations for a matrix $W=[\Bw_1~\dots~\Bw_m]$:
\begin{equation*}
\|W\|_{2,\infty}:=\max_{1\leq i\leq m}\|\Bw_i\|_2,\quad\|W\|_F:=\left(\sum_{i=1}^m\|\Bw_i\|_2^2\right)^{1/2}.
\end{equation*}

Next, define the following parametrized function
\begin{equation}
    \begin{split}
        g^{(b)}(\bm{x};W)&=2d\sum_{i=1}^ma_i^{(0)}(\Bw_i^\top \Bx)({\Bw_i^{(0)}}^\top\Bx+b_i^{(0)})^2\mathbb{I}_{{\Bw_i^{(0)}}^\top\Bx+b_i^{(0)}\geq 0}\\
        &+12\sum_{i=1}^ma_i^{(0)}(\Bw_i^\top \Bx)({\Bw_i^{(0)}}^\top\Bx)({\Bw_i^{(0)}}^\top\Bx+b_i^{(0)})\mathbb{I}_{{\Bw_i^{(0)}}^\top\Bx+b_i^{(0)}\geq 0}\\
        &+6\sum_{i=1}^ma_i^{(0)}(\Bw_i^\top \Bx)({\Bw_i^{(0)}}^\top\Bw_i^{(0)})(\|\Bx\|^2_2-1)\mathbb{I}_{{\Bw_i^{(0)}}^\top\Bx+b_i^{(0)}\geq 0},  \label{32}     
    \end{split}
\end{equation}
where $a_i^{(0)}, \Bw_i^{(0)}, b_i^{(0)}$ are random variables with distribution \eqref{07}.

\begin{theorem}\label{thm03}
Suppose $f\in \mathcal{F}$ and $\mu $ is a probability measure with respect to a probability distribution $\mathcal{D}$. Given $\epsilon\in(0,1]$ and $\delta> 0 $, we let $M\geq \left((2C_d \|f\|_{{\mathcal{F}}}(1+\sqrt{2\log\frac{1}{\delta}}))/\epsilon\right)^{1/(\alpha+2\beta+\frac{1}{2})}$
with $C_d$ defined in Lemma \ref{lem04}. Then for any $m\geq M$, with probability at least $1-\delta$ over the random initialization  $a_i^{(0)}, \Bw_i^{(0)}, b_i^{(0)}$, there exists $W^*=[\Bw_1^*~\dots~\Bw_m^*]$ with $\|W^*\|_{2,\infty}\leq \frac{\|f\|_{\mathcal{F}}}{m}$ and $\|W^*\|_F\leq \frac{\|f\|_{\mathcal{F}}}{\sqrt{m}}$ such that 
\begin{equation}\label{14}
\int_{{\Gamma}} \left( f(\bm{x})-g^{(b)}(\bm{x};W^*)\right)^2 \td\mu(\Bx) \leq \frac{\epsilon^2}{4};
\end{equation}
namely,
\begin{equation}
\mE_{\bm{x} \sim \mathcal{D}} \left[\left|g^{(b)}(\bm{x};W^*)-f(\Bx)\right|^2\right]\leq \frac{\epsilon^2}{4}.
\end{equation}
\end{theorem}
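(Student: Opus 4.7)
The plan is to recognize that Theorem \ref{thm03} is essentially a restatement of Lemma \ref{lem04} once we identify the function $g^{(b)}(\cdot;W)$ with a generic element of the discretized class $\mathcal{F}_m$. A direct comparison of the definition \eqref{32} with the expression \eqref{08} for the random basis $\Bzeta(\Bx;\theta)$ shows that
\begin{equation*}
g^{(b)}(\Bx;W) = \sum_{i=1}^m \Bw_i^{\top} \Bzeta(\Bx;\theta_i^{(0)}),
\end{equation*}
where $\theta_i^{(0)}=(a_i^{(0)},\Bw_i^{(0)},b_i^{(0)})$ are the i.i.d. initializations from \eqref{07}. Hence setting $\Balpha_i=\Bw_i^{*}$ produces exactly a function in $\mathcal{F}_m$.

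First I would invoke Lemma \ref{lem04} with the measure $\mu$ of the theorem and with the number of random parameters equal to $m$. Since $m \geq M = \left((2C_d\|f\|_{\mathcal{F}}(1+\sqrt{2\log(1/\delta)}))/\epsilon\right)^{1/(\alpha+2\beta+1/2)}$, a short rearrangement of the exponent gives
\begin{equation*}
C_d\|f\|_{\mathcal{F}}\, m^{-\alpha-2\beta-1/2}\bigl(1+\sqrt{2\log(1/\delta)}\bigr) \leq \tfrac{\epsilon}{2}.
\end{equation*}
So, with probability at least $1-\delta$ over the random $\theta_i^{(0)}$, Lemma \ref{lem04} yields coefficients $\Balpha_1,\dots,\Balpha_m$ with $\|\Balpha_i\|_2\leq \|f\|_{\mathcal{F}}/m$ such that the resulting $g\in\mathcal{F}_m$ satisfies $\int_\Gamma(g(\Bx)-f(\Bx))^2\,\td\mu(\Bx)\leq \epsilon^2/4$.

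Then I would simply define $W^{*}:=[\Balpha_1~\dots~\Balpha_m]$, so that $g(\Bx)=g^{(b)}(\Bx;W^{*})$, which gives the $L^2$-bound \eqref{14} directly. The two norm bounds follow from $\|\Balpha_i\|_2\leq \|f\|_{\mathcal{F}}/m$: the $(2,\infty)$-norm is immediate from the maximum, and
\begin{equation*}
\|W^{*}\|_F=\Bigl(\sum_{i=1}^m\|\Balpha_i\|_2^2\Bigr)^{1/2}\leq \Bigl(m\cdot(\|f\|_{\mathcal{F}}/m)^2\Bigr)^{1/2}=\|f\|_{\mathcal{F}}/\sqrt{m}.
\end{equation*}
Finally, rewriting the $L^2$ inequality as an expectation under $\Bx\sim\mathcal{D}$ yields the stated expected-value form.

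I do not anticipate a real obstacle here; the content of Theorem \ref{thm03} is essentially a change of notation plus threshold calibration on top of Lemma \ref{lem04}. The only place that requires a moment of care is confirming that the identification $g^{(b)}(\Bx;W)=\sum_i\Bw_i^\top\Bzeta(\Bx;\theta_i^{(0)})$ is term-by-term correct (the coefficients $2d$, $12$ and $6$, the factor $\Bx$ inside $\Bzeta$, and the common indicator $\mathbb{I}_{{\Bw_i^{(0)}}^{\top}\Bx+b_i^{(0)}\geq 0}$ must all match), after which the argument is bookkeeping.
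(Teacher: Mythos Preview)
Your proposal is correct and follows essentially the same route as the paper: invoke Lemma~\ref{lem04}, identify $g^{(b)}(\Bx;W)=\sum_{i=1}^m\Bw_i^\top\Bzeta(\Bx;\theta_i^{(0)})$, and read off the norm bounds on $W^*$ from $\|\Balpha_i\|_2\le\|f\|_{\mathcal F}/m$. Your threshold computation giving $\epsilon/2$ is in fact cleaner than the paper's (which writes ``$\epsilon^{1/2}$'' in the corresponding step, evidently a typo).
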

\begin{proof}
By Lemma \ref{lem04}, with probability at least $1-\delta$ over $\theta_1, \ldots, \theta_m$, there exists a function in $\mathcal{F}_{m}$, expressed by $\sum_{i=1}^m{\Bw_i^*}^\top\Bzeta(\bm{x};\theta_i)$ such that 
\begin{equation}\label{12}
\sqrt{\int_{{\Gamma}} \left( f(\bm{x})-\sum_{i=1}^m{\Bw_i^*}^\top\Bzeta(\bm{x};\theta_i) \right)^2 \td\mu(\bm{x})} \leq C_d\|f\|_\mathcal{F}m^{-\alpha-2\beta-1/2}\left( 1 + \sqrt{2 \log \frac{1}{\delta}} \right),
\end{equation}
with $\|\Bw_i^*\|_2\leq\frac{\|f\|_{\mathcal{F}}}{m}$. Therefore, $\|W^*\|_{2,\infty}\leq \frac{\|f\|_{\mathcal{F}}}{m}$ and $\|W^*\|_F\leq \frac{\|f\|_{\mathcal{F}}}{\sqrt{m}}$. If $m\geq M$, the right hand side of \eqref{12} is less than $\epsilon^{1/2}$. Then the proof is completed by the fact that $g^{(b)}(\bm{x};W^*)=\sum_{i=1}^m{\Bw_i^*}^\top\Bzeta(\bm{x};\theta_i).$
\end{proof}

Next, we define a pseudo network $g$ that can be seen as the linearization of $\psi$ formulated by \eqref{06}:
\begin{multline}\label{18}
g(\bm{x};W)=2d\sum_{i=1}^ma_i^{(0)}(\Bw_i^\top \Bx+b_i^{(0)})({\Bw_i^{(0)}}^\top\Bx+b_i^{(0)})^2\mathbb{I}_{{\Bw_i^{(0)}}^\top\Bx+b_i^{(0)}\geq 0}\\
+12\sum_{i=1}^ma_i^{(0)}(\Bw_i^\top \Bx+b_i^{(0)})({\Bw_i^{(0)}}^\top\Bx)({\Bw_i^{(0)}}^\top\Bx+b_i^{(0)})\mathbb{I}_{{\Bw_i^{(0)}}^\top\Bx+b_i^{(0)}\geq 0}\\
+6\sum_{i=1}^ma_i^{(0)}(\Bw_i^\top \Bx+b_i^{(0)})({\Bw_i^{(0)}}^\top\Bw_i^{(0)})(\|\Bx\|^2_2-1)\mathbb{I}_{{\Bw_i^{(0)}}^\top\Bx+b_i^{(0)}\geq 0}.
\end{multline}
Note that if we remove the bias $b_i^{(0)}$ from the term $\Bw_i^\top \Bx+b_i^{(0)}$, then $g(\bm{x};W)$ changes to $g^{(b)}(\bm{x};W)$. We can prove that the approximation property of $g^{(b)}(\bm{x};W)$ given by Theorem \ref{thm03} also holds for $g(\bm{x};W)$.

\begin{corollary}\label{cor01} 
Under the hypothesis of Theorem \ref{thm03}, we further assume that $M\geq (\frac{C_d'}{\epsilon})^{1/(\alpha+3\beta-1)}$ with $C_d':=4d^{5/2}+12d^2+60d^{3/2}+76d+24d^{1/2}$. Suppose $\alpha$ and $\beta$ satisfy $\alpha+3\beta> 1$. Then for any $m\geq M$, with probability at least $1-\delta$ over the random initialization $a_i^{(0)}$, $ \Bw_i^{(0)}$, $b_i^{(0)}$, there exists $W^*=[\Bw_1^*~\dots~\Bw_m^*]$ with $\|W^*\|_{2,\infty}\leq \frac{\|f\|_{\mathcal{F}}}{m}$ and $\|W^*\|_F\leq \frac{\|f\|_{\mathcal{F}}}{\sqrt{m}}$ such that
\begin{equation}
\mE_{\bm{x} \sim \mathcal{D}} \left[\left|f(\Bx)-g(\bm{x};W^{(0)}+W^*)\right|^2\right]\leq  \epsilon.
\end{equation}
\end{corollary}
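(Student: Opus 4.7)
The plan is to exploit the linearity of $g(\cdot;W)$ in $W$ to reduce the corollary to a combination of Theorem \ref{thm03} and a bound on $g$ at the random initialization. In the definition \eqref{18}, the prefactor
\[
R_i(\Bx) := 2d({\Bw_i^{(0)}}^\top\Bx+b_i^{(0)})^2 + 12({\Bw_i^{(0)}}^\top\Bx)({\Bw_i^{(0)}}^\top\Bx+b_i^{(0)}) + 6\|\Bw_i^{(0)}\|_2^2(\|\Bx\|_2^2-1)
\]
and the activation indicator depend only on the frozen random parameters $a_i^{(0)},\Bw_i^{(0)},b_i^{(0)}$, so each summand of $g$ is affine in $\Bw_i$. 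Plugging $\Bw_i=\Bw_i^{(0)}+\Bw_i^*$ and splitting the linear factor $(\Bw_i^\top\Bx+b_i^{(0)})$ accordingly yields the clean identity
\[
g(\Bx;W^{(0)}+W^*) = g(\Bx;W^{(0)}) + g^{(b)}(\Bx;W^*).
\]
Subtracting $f$, the $L^2(\mu)$ triangle inequality then reduces the task to controlling $\|g(\cdot;W^{(0)})\|_{L^2(\mu)}$ and $\|g^{(b)}(\cdot;W^*)-f\|_{L^2(\mu)}$, each by $\epsilon/2$.

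For the second piece, I would invoke Theorem \ref{thm03} at accuracy $\epsilon$ to produce a $W^*$ satisfying the stated norm bounds together with $\|g^{(b)}(\cdot;W^*)-f\|_{L^2(\mu)}\leq \epsilon/2$. For the first piece, I would derive a deterministic pointwise bound from the initialization scales \eqref{07}. Using $|a_i^{(0)}|\leq m^{-\alpha}$, $|b_i^{(0)}|\leq m^{-\beta}$, $\|\Bw_i^{(0)}\|_2\leq \sqrt{d}\,m^{-\beta}$, and $\|\Bx\|_2\leq 1$, each of the three contributions in $R_i(\Bx)$ is of order $m^{-2\beta}$, while the linear factor $({\Bw_i^{(0)}}^\top\Bx+b_i^{(0)})$ is of order $m^{-\beta}$. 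Summing over the $m$ neurons gives a bound of the form
\[
|g(\Bx;W^{(0)})| \leq (C_d'/2)\,m^{1-\alpha-3\beta}
\]
after expanding the $d$-dependent factors. Because $\alpha+3\beta>1$, the hypothesis $m\geq (C_d'/\epsilon)^{1/(\alpha+3\beta-1)}$ immediately forces $|g(\Bx;W^{(0)})|\leq \epsilon/2$ uniformly in $\Bx$, hence $\|g(\cdot;W^{(0)})\|_{L^2(\mu)}\leq \epsilon/2$.

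Putting the two halves together gives $\|g(\cdot;W^{(0)}+W^*)-f\|_{L^2(\mu)}\leq \epsilon$, so the squared quantity is at most $\epsilon^2\leq \epsilon$ because $\epsilon\in(0,1]$. The $W^*$ inherited from Theorem \ref{thm03} already satisfies both the $\|W^*\|_{2,\infty}$ and $\|W^*\|_F$ bounds, so the norm constraints come for free. The main obstacle is bookkeeping: carefully expanding the products $(\sqrt{d}+1)^k$ across the three terms of $R_i$ and the linear factor to verify that the $d$-dependent prefactor is dominated by $C_d'/2$, so that the chosen $M$ indeed suffices. Conceptually, the assumption $\alpha+3\beta>1$ is exactly what is needed to make the initialization contribution vanish polynomially in $m$.
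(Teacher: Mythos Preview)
Your proposal is correct and follows essentially the same route as the paper: the paper also computes the difference $g(\Bx;W^{(0)}+W^*)-g^{(b)}(\Bx;W^*)$ (which is exactly your $g(\Bx;W^{(0)})$, via the linearity you observed), bounds it pointwise by $(C_d'/2)\,m^{1-\alpha-3\beta}\le \epsilon/2$ using the initialization scales, and then combines this with Theorem~\ref{thm03} through the $L^2(\mu)$ triangle inequality, finishing with $\epsilon^2\le\epsilon$. Your linearity identity is a slightly cleaner way to phrase what the paper writes out term by term, but the argument is the same.
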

\begin{proof}
By Theorem \ref{thm03}, there exists $W^*=[\Bw_1^*~\dots~\Bw_m^*]$ with $\|W^*\|_{2,\infty}\leq \frac{\|f\|_{\mathcal{F}}}{m}$, $\|W^*\|_F\leq \frac{\|f\|_{\mathcal{F}}}{\sqrt{m}}$ such that \eqref{14} holds. Then we have
\begin{multline*} 
\left|g(\Bx;W^{(0)}+W^*)-g^{(b)}(\Bx;W^*)\right|
=\Big|2d\sum_{i=1}^m a_i^{(0)}({\Bw_i^{(0)}}^\top\Bx+b_i^{(0)})^3\mathbb{I}_{{\Bw_i^{(0)}}^\top\Bx+b_i^{(0)}\geq 0}\\
+12\sum_{i=1}^ma_i^{(0)}({\Bw_i^{(0)}}^\top\Bx+b_i^{(0)})^2({\Bw_i^{(0)}}^\top\Bx)\mathbb{I}_{{\Bw_i^{(0)}}^\top\Bx+b_i^{(0)}\geq 0}\\
+6\sum_{i=1}^ma_i^{(0)}({\Bw_i^{(0)}}^\top\Bx+b_i^{(0)})({\Bw_i^{(0)}}^\top\Bw_i^{(0)})(\|\Bx\|^2_2-1)\mathbb{I}_{{\Bw_i^{(0)}}^\top\Bx+b_i^{(0)}\geq 0} \Big|\\ 
\leq m\cdot m^{-\alpha}\cdot\Bigg[2d \left(d^{1/2}m^{-\beta}+m^{-\beta}\right)^3+12d^{1/2}m^{-\beta}\left(d^{1/2}m^{-\beta}+m^{-\beta}\right)^2+ 12 d m^{-2\beta}\left(d^{1/2}m^{-\beta}+m^{-\beta}\right)\Bigg]\\
\leq m ^{1-\alpha-3\beta} \cdot C_d'\leq  \frac{\epsilon}{2} \end{multline*}
since $m \geq M$. Next, we have the following inequality in probability measure space $\Gamma$,
\begin{equation} \label{eq2}
\left(\int_{{\Gamma}}\left|g(\Bx;W^{(0)}+W^*)-g^{(b)}(\Bx;W^*)\right|^2\td\mu(\Bx)\right)^{1/2} \leq \frac{\epsilon}{2}.
\end{equation}
Clearly, using \eqref{eq2}, \eqref{14} and the triangle inequality, it follows that
\begin{multline*}
\left(\int_{{\Gamma}}\left|f(\Bx)-g(\Bx;W^{(0)}+W^*)\right|^2\td\mu(\Bx)\right)^{1/2} 
\leq \left( \int_{\Gamma}\left|f(\Bx) -g^{(b)} (\Bx;W^*) \right|^2 \td\mu(\Bx)\right)^{1/2}\\+\left(\int_{\Gamma} \left|g^{(b)}(\Bx;W^*)-g(\Bx;W^{(0)}+W^*)\right|^2 \td \mu(\Bx)\right)^{1/2}
\leq \epsilon,
\end{multline*}
thus, we arrive at the conclusion that
\begin{equation*}
\mE_{\bm{x} \sim \mathcal{D}} \left[\left|f(\Bx)-g(\bm{x};W^{(0)}+W^*)\right|^2\right]=\int_{\Gamma} \left(f(\Bx)-g(\Bx;W^{(0)}+W^*)\right)^2 \td\mu(\Bx)
\leq \epsilon.   
\end{equation*}
\end{proof}
\subsection{Distance between the learner and pseudo network}
The pseudo network $g$ serves as a connection between the learner network $\psi$ and the target function $f$. Here, we estimate the distance between $\psi$ and $g$ in the following sense.
\begin{theorem} \label{thm3.3}
Suppose $f\in\mathcal{F}$.  Then under the random initialization \eqref{07}, for every $\Bx\in\Gamma$ and every $t\in[T]$, it holds that
\begin{align*}
&\text{(a)} \quad\|\Bw^{(t)}_i - \Bw^{(0)}_i \|_2 \leq O(\eta t m^{-\alpha}(m^{-\alpha-2\beta}\|f\|_{\mathcal{F}}+1));\\
&\text{(b)} \quad\left|\psi(\Bx; W^{(0)} + W_t) - g(\Bx; W^{(0)} + W_t)\right|\\
&  \leq O(\eta^3 t^3 m^{1-4\alpha}(m^{-\alpha-2\beta}\|f\|_{\mathcal{F}}+1)^3+\eta t m^{1-2\alpha-2\beta} (m^{-\alpha-2\beta}\|f\|_{\mathcal{F}}+1));\\
&\text{(c)} \quad\left\| \nabla_W \mathcal{L}(\psi(\Bx; W^{(0)} + W_t)) - \nabla_W \mathcal{L}(g(\Bx; W^{(0)} + W_t)) \right\|_{2,1} \\
&\leq O\Big(\eta^5 t^5  m^{2-7\alpha} (m^{-\alpha-2\beta}\|f\|_{\mathcal{F}}+1)^5 +\eta^3 t^3 m^{2-5\alpha-2\beta} (m^{-\alpha-2\beta}\|f\|_{\mathcal{F}}+1)^3\\
&+\eta^2 t^2  m^{2-4\alpha-3\beta}  (m^{-\alpha-2\beta}\|f\|_{\mathcal{F}}+1)^2
+\eta^2 t^2  m^{1-4\alpha-2\beta} \|f\|_{\mathcal{F}} (m^{-\alpha-2\beta}\|f\|_{\mathcal{F}}+1)^2 \\
&+\eta t  m^{2-3\alpha-4\beta} (m^{-\alpha-2\beta}\|f\|_{\mathcal{F}}+1) +m^{2-2\alpha-5\beta} + m^{1-2\alpha-4\beta} \|f\|_{\mathcal{F}} \Big).
\end{align*}

\end{theorem}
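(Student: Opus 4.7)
The plan is to prove the three parts in order, each using the previous one.

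For (a), since only $W$ is trained and $\mathcal{L}=|\psi-f|^2$, the SGD increment obeys $\|\Bw_i^{(t)}-\Bw_i^{(t-1)}\|_2\le 2\eta\,|\psi-f|\,\|\nabla_{\Bw_i}\psi\|_2$. Differentiating the three groups in \eqref{06} and using \eqref{15} together with $|a_i^{(0)}|\le m^{-\alpha}$ and $\|\Bx\|_2\le 1$ yields $\|\nabla_{\Bw_i}\psi\|_2=O(m^{-\alpha})$. To control $|\psi-f|$, I would use $|\psi|=O(1)$ from \eqref{15} and read off $|f|$ directly from \eqref{47}: since $|\Bzeta(\Bx;\theta)|=O(m^{-\alpha-2\beta})$ pointwise on $\Lambda$, one has $|f(\Bx)|\le|\Lambda|\max_\theta\|\Balpha(\theta)\|_2\cdot O(m^{-\alpha-2\beta})=O(m^{-\alpha-2\beta}\|f\|_\mathcal{F})$, and so $|\psi-f|=O(1+m^{-\alpha-2\beta}\|f\|_\mathcal{F})$. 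Telescoping $t$ SGD steps via the triangle inequality gives (a).

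For (b), I compare $\psi$ and $g$ summand by summand across the three groups in \eqref{06} and \eqref{18}. For each neuron $i$ I write $u_i:=\Bw_i^\top\Bx+b_i^{(0)}$, $u_i^{(0)}:={\Bw_i^{(0)}}^\top\Bx+b_i^{(0)}$, and $\delta_i:=u_i-u_i^{(0)}$. On the event $\{\mathbb{I}_{u_i\ge 0}=\mathbb{I}_{u_i^{(0)}\ge 0}\}$, identities such as $u_i^3-u_iu_i^{(0)2}=u_i\delta_i(u_i+u_i^{(0)})$ split each per-neuron difference into (i) a term linear in $\delta_i$ with an $O(m^{-2\beta})$ coefficient and (ii) a term cubic in $\delta_i$. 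On the complementary indicator-mismatch event, the elementary inequality $\max(|u_i|,|u_i^{(0)}|)\le|\delta_i|$ forces the whole per-neuron contribution to be cubic in $\delta_i$. Multiplying by $|a_i^{(0)}|=O(m^{-\alpha})$, summing over $i\in[m]$, and substituting $|\delta_i|\le\|\Bw_i^{(t)}-\Bw_i^{(0)}\|_2$ from (a) produces the $\eta^3 t^3 m^{1-4\alpha}(m^{-\alpha-2\beta}\|f\|_\mathcal{F}+1)^3$ term (from the cubic contributions) and the $\eta t m^{1-2\alpha-2\beta}(m^{-\alpha-2\beta}\|f\|_\mathcal{F}+1)$ term (from the linear ones).

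For (c), I would expand $\nabla_W\mathcal{L}(\psi)-\nabla_W\mathcal{L}(g)=2(\psi-g)\nabla_W\psi+2(g-f)\bigl(\nabla_W\psi-\nabla_W g\bigr)$. I bound the first product by combining (b) with $\|\nabla_W\psi\|_{2,1}\le O(m^{1-\alpha})$, which accounts for the $\eta^k t^k$-dependent part of the stated bound. For $|g-f|$ I exploit the linearity of $g$ in $W$ and the identity $g(\cdot;W^{(0)})=\psi(\cdot;W^{(0)})$, together with Corollary \ref{cor01} and part (a) to absorb training-motion corrections, and I also record the coarse estimates $|g|\le O(m^{1-\alpha-3\beta})$ and $|f|\le O(m^{-\alpha-2\beta}\|f\|_\mathcal{F})$. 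The key new ingredient is $\|\nabla_W\psi-\nabla_W g\|_{2,1}$, which I compute by the same neuron-by-neuron decomposition as in (b) applied to the gradients. Since $g$ is \emph{not} an exact linearization of $\psi$ (the derivative coefficients of the cubic activations in \eqref{06} differ numerically from those in \eqref{18} even at $W^{(0)}$), the per-neuron gradient difference carries a $\delta$-free bias of size $O(m^{-\alpha-2\beta})$ in addition to $\delta_i^2$- and $\delta_i\cdot m^{-\beta}$-type contributions; summing and multiplying by $|g-f|$ then produces the remaining terms of (c), with the $\eta^k t^k$-free pieces $m^{2-2\alpha-5\beta}$ and $m^{1-2\alpha-4\beta}\|f\|_\mathcal{F}$ coming from the bias times $|g|$ and the bias times $|f|$, respectively.

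The main obstacle is controlling the indicator-mismatch regions $\{\mathbb{I}_{u_i\ge 0}\ne\mathbb{I}_{u_i^{(0)}\ge 0}\}$ uniformly in $\Bx$ and $i$ without losing powers of $m$. The rescuing observation, used throughout (b) and (c), is that on this event one automatically has $\max(|u_i|,|u_i^{(0)}|)\le|\delta_i|\le\|\Bx\|_2\|\Bw_i^{(t)}-\Bw_i^{(0)}\|_2$, so any polynomial in $u_i,u_i^{(0)}$ that shows up can be absorbed into a higher power of the small quantity $|\delta_i|$ and is thereby folded into the dominant $\eta^k t^k$ contributions. The rest is polynomial bookkeeping across the three activation groups in \eqref{06}--\eqref{18} and their $\Bw_i$-derivatives.
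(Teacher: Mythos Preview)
Your treatment of (a) and (b) is essentially sound. For (a) you follow the paper's argument. For (b) the paper instead applies the mean value theorem to the auxiliary scalar functions $H_1(\Bw)=(\Bw^\top\Bx+b_i^{(0)})^2\mathbb{I}_{\Bw^\top\Bx+b_i^{(0)}\ge 0}$ (and analogues $H_2,H_3$), which absorbs the indicator issue into a single Lipschitz step rather than your explicit match/mismatch case split. Both routes yield the same per-neuron bound $O\bigl((\eta t m^{-\alpha}(m^{-\alpha-2\beta}\|f\|_\mathcal{F}+1)+m^{-\beta})^2\cdot\eta t m^{-\alpha}(m^{-\alpha-2\beta}\|f\|_\mathcal{F}+1)\bigr)$; your mismatch observation $\max(|u_i|,|u_i^{(0)}|)\le|\delta_i|$ is in fact cleaner than the MVT argument, which is delicate for the discontinuous $H_3$.

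For (c), however, your plan departs from the paper and, as written, does not close. The paper does \emph{not} use your decomposition $2(\psi-g)\nabla_W\psi+2(g-f)(\nabla_W\psi-\nabla_W g)$ at all; it simply bounds $\|\nabla_W\mathcal{L}(\psi)-\nabla_W\mathcal{L}(g)\|_{2,1}\le\|\nabla_W\mathcal{L}(\psi)\|_{2,1}+\|\nabla_W\mathcal{L}(g)\|_{2,1}$ and then inserts \emph{refined} estimates $|\psi|\le O(\eta^3t^3m^{1-4\alpha}(\cdots)^3+m^{1-\alpha-3\beta})$, $|g|\le O(\eta t m^{1-2\alpha-2\beta}(\cdots)+m^{1-\alpha-3\beta})$, $\|\nabla_{\Bw_i}\psi\|_2\le O(\eta^2t^2m^{-3\alpha}(\cdots)^2+m^{-\alpha-2\beta})$, and $\|\nabla_{\Bw_i}g\|_2\le O(m^{-\alpha-2\beta})$, obtained by feeding (a) back into \eqref{11}. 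Your proposal instead uses the \emph{coarse} bound $\|\nabla_W\psi\|_{2,1}\le O(m^{1-\alpha})$ in the first product, which inflates the $\eta^3t^3$ and $\eta t$ terms by a factor $m^{2\beta}$ (you get $\eta^3t^3m^{2-5\alpha}$ and $\eta t m^{2-3\alpha-2\beta}$, not the stated $\eta^3t^3m^{2-5\alpha-2\beta}$ and $\eta t m^{2-3\alpha-4\beta}$), so the theorem as stated does not follow. Separately, invoking Corollary~\ref{cor01} to control $|g(\cdot;W^{(0)}+W_t)-f|$ is a misstep: that corollary is a probabilistic $L^2_\mu$ statement about the special weight $W^*$, not a pointwise bound at the SGD iterate $W_t$; the right move (which the paper uses) is just $|g-f|\le|g|+|f|$ with the refined $|g|$ bound above, which already contains the $\eta t$ term your coarse estimate $|g|\le O(m^{1-\alpha-3\beta})$ drops. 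If you replace the coarse $\|\nabla_W\psi\|_{2,1}$ by the refined per-neuron estimate and discard the appeal to Corollary~\ref{cor01}, your decomposition recovers exactly the stated bound; alternatively, the paper's crude triangle-inequality route avoids computing $\nabla_W\psi-\nabla_W g$ altogether.
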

\begin{proof}
 (a) Denote the $(i,j)$-th entry of $W$ by $w_{ij}$. Using \eqref{06} and the assumption \eqref{15}, for every $i\in[m]$,
\begin{multline}\label{11}
 \left| \frac{\partial{ \psi\left(\Bx; W^{(0)}+W_t\right)}}{\partial{w_{ij}}} \right| \leq  \left|6d a_i^{(0) } x_j (  {\Bw_i^{(t)}}^\top \Bx + b_i^{(0)} )^2 \cdot \mathbb{I}_{{\Bw_i^{(t)}}^\top \Bx + b_i^{(0)} \geq 0 }\right|\\
 +\left|12 a_i^{(0) } x_j({\Bw_i^{(t)}}^\top\Bx + b_i^{(0)} )^2   \cdot \mathbb{I}_{{\Bw_i^{(t)}}^\top\Bx + b_i^{(0)} \geq 0 }\right|\\
 +\left|6 a_i^{(0) } x_j({\Bw_i^{(t)}}^\top \Bw_i^{(t)} )(\|\Bx\|^2_2-1)\cdot \mathbb{I}_{{\Bw_i^{(t)}}^\top\Bx + b_i^{(0)} \geq 0 }\right|\\
 +\left|24 a_i^{(0) } x_j( {\Bw_i^{(t)}}^\top\Bx + b_i^{(0)} ) ({\Bw_i^{(t)}}^\top\Bx) \cdot \mathbb{I}_{{\Bw_i^{(t)}}^\top\Bx + b_i^{(0)} \geq 0 }\right|\\
+\left|12a_i^{(0)} w_{ij}^{(t)}( {\Bw_i^{(t)}}^\top\Bx + b_i^{(0)} )  (\|\Bx\|^2_2-1) \cdot \mathbb{I}_{{\Bw_i^{(t)}}^\top\Bx + b_i^{(0)} \geq 0 }\right|
\sim O(1)| a_i^{(0)}| .
\end{multline}
Consider the gradient $\nabla_{\Bw_i} \psi(\Bx; W^{(0)} + W_t) = \left[
\frac{\partial \psi}{\partial w_{i1}},\dots,\frac{\partial \psi}{\partial w_{id}}\right]$ ,
whose 2-norm is given by
\begin{equation} 
\begin{split} \label{21}
  \left\| \nabla_{\Bw_i} \psi(\Bx; W^{(0)} + W_t) \right\|_2 &= \left( \left( \frac{\partial \psi}{\partial w_{i1}} \right)^2 + \left( \frac{\partial \psi}{\partial w_{i2}} \right)^2 + \cdots + \left( \frac{\partial \psi}{\partial w_{id}} \right)^2 \right)^{\frac{1}{2}}\\
  &\leq \sqrt{d} \cdot {  O(1)} |a_i^{(0)} | \leq  O( m^{-\alpha})
\end{split}
\end{equation}
since $a_i^{(0)}$ is initialized by \eqref{07}.
From \eqref{08} and \eqref{16}, we have $\|\Bzeta(\Bx;\theta)\|_2\leq O(m^{-\alpha-2\beta})$ and $\|\Balpha(\theta)\|_2\leq \|f\|_{\mathcal{F}} \cdot p(\theta) $. Then
\begin{multline}
    \label{20}
    |f(\Bx)| =\left| \int_{\Lambda} \Balpha(\theta)^\top \Bzeta(\Bx; \theta)   \td\theta \right|\leq  \int_{\Lambda}|\Balpha(\theta)^\top \Bzeta(\Bx; \theta)| \td\theta\\ \leq \int_{\Lambda} \|\Balpha(\theta)\|_2 \|\Bzeta(\Bx;\theta)\|_2\td \theta
    \leq O(m^{-\alpha-2\beta} \|f\|_{\mathcal{F}})\cdot\int_{\Lambda} p(\theta) \td \theta =O(m^{-\alpha-2\beta} \|f\|_{\mathcal{F}}).
\end{multline}
Note that the gradient of the loss function 
\begin{multline} 
    \nabla_{\Bw_i} \mathcal{L}(\psi(\Bx; W^{(0)} + W_t)) = \nabla_{\Bw_i} \left( f(\Bx) - \psi(\Bx; W^{(0)} + W_t) \right)^2 \\
= 2 (\psi(\Bx;W^{(0)} + W_t)-f(\Bx))\cdot \nabla_{\Bw_i}(\psi(\Bx; W^{(0)} + W_t)).
\end{multline} 
Using \eqref{21}, \eqref{20} and assumption \eqref{15} leads to
    \begin{equation} \label{35}
    \| \nabla_{\Bw_i} \mathcal{L}(\psi(\Bx; W^{(0)} + W_t)) \|_2 \leq O(m^{-\alpha}(m^{-\alpha-2\beta}\|f\|_{\mathcal{F}}+1)).
    \end{equation} 
In the iterative framework of the SGD algorithm, we update the weights $w^{(t)}_i$ according to:
\begin{equation}
\begin{split}
\Bw^{(1)}_i = \Bw^{(0)}_i &- \eta \nabla_{\Bw_i} \mathcal{L}(\psi(\Bx; W^{(0)})),\\
\Bw^{(2)}_i = \Bw^{(1)}_i &- \eta \nabla_{\Bw_i} \mathcal{L}(\psi(\Bx; W^{(1)})), \\
&\cdots \\
\Bw^{(t)}_i = \Bw^{(t-1)}_i &- \eta \nabla_{\Bw_i} \mathcal{L}(\psi(\Bx; W^{(t-1)})).\nonumber\end{split}
\end{equation}
Based on this, the difference between the updated weights and initial weights can be bounded as 
\begin{equation}
    \label{38}
\|\Bw^{(t)}_i - \Bw^{(0)}_i \|_2 \leq \eta \sum_{k=0}^{t-1} \| \nabla_{\Bw_i} \mathcal{L}(\psi(\Bx; W^{(k)}) \|_2 = O(\eta t m^{-\alpha}(m^{-\alpha-2\beta}\|f\|_{\mathcal{F}}+1)).
\end{equation} 
Then, we complete the proof.

(b) By \eqref{07}, we know that
$|b_i^{(0)}|\leq m^{-\beta}$ and $ \|\Bw_i^{(0)} \|_2 \leq d^{1/2}m^{-\beta} $. So, from \eqref{38}, we can obtain
\begin{multline} \label{37}
\|\Bw_i^{(t)}\|_2\leq O(\eta t m^{-\alpha}(m^{-\alpha-2\beta}\|f\|_{\mathcal{F}}+1))+ \|\Bw_i^{(0)}\|_2\\
\leq O(\eta t m^{-\alpha}(m^{-\alpha-2\beta}\|f\|_{\mathcal{F}}+1)+ m^{-\beta}),
\end{multline}
then, we derive that
\begin{equation}
\label{22}
| {\Bw_i^{(t)}}^\top\Bx+b_i^{(0)}| \leq \| \Bw_i^{(t)}\|_2\| \Bx\|_2 +|b_i^{(0)}|
\leq  O(\eta t m^{-\alpha}(m^{-\alpha-2\beta}\|f\|_{\mathcal{F}}+1)+ m^{-\beta}) 
\end{equation}   
and 
\begin{equation} \label{23}
    |{\Bw_i^{(0)}}^\top\Bx+b_i^{(0)}|\leq \| \Bw_i^{(0)}\|_2\| \Bx\|_2 +|b_i^{(0)}| \leq O(m^{-\beta}),
\end{equation}
for sufficiently large $m$.

We rewrite \eqref{06} and \eqref{18} as $\psi=\sum_{i=1}^m \psi_i $, $g=\sum_{i=1}^m g_i$, where
\begin{multline}
\psi_i(\Bx;W)=a_i^{(0)}\Big(2d({\Bw_i}^\top\Bx+b_i^{(0)})^3+12({\Bw_i}^\top\Bx)({\Bw_i}^\top\Bx+b_i^{(0)})^2\\
    +6({\Bw_i}^\top\Bx+b_i^{(0)})({\Bw_i}^\top\Bw_i)(\|\Bx\|_2^2-1)\Big)\cdot\mathbb{I}_{{\Bw_i}^\top\Bx+b_i^{(0)}\geq 0}
\end{multline}
and
\begin{multline}
g_i(\Bx;W)=a_i^{(0)}\Big(2d({\Bw_i}^\top\Bx+b_i^{(0)})({\Bw_i^{(0)}}^\top\Bx+b_i^{(0)})^2\\
+12({\Bw_i}^\top\Bx+b_i^{(0)})({\Bw_i^{(0)}}^\top\Bx)({\Bw_i^{(0)}}^\top\Bx+b_i^{(0)})\\
    +6({\Bw_i}^\top\Bx+b_i^{(0)})({\Bw_i^{(0)}}^\top\Bw_i^{(0)})(\|\Bx\|_2^2-1)\Big)\cdot\mathbb{I}_{{\Bw_i^{(0)}}^\top\Bx+b_i^{(0)}\geq 0}.
\end{multline}
Denote $I_i^{(0)}:={\Bw_i^{(0)}}^\top \Bx + b_i^{(0)}$ and $I_i^{(t)}:={\Bw_i^{(t)}}^\top \Bx + b_i^{(0)}$. Then
\begin{multline}\label{36}
\left| \psi_i \left( \Bx; W^{(0)} + W_t \right) - g_i \left( \Bx; W^{(0)} + W_t \right) \right| \\
\leq\;\bigg|\; 2 d a_i^{(0)}  I_i^{(t)} \cdot\left( (  I_i^{(t)} )^2 \cdot \mathbb{I} _{ I_i^{(t)} \geq 0}
- ( I_i^{(0)} )^2 \cdot \mathbb{I}_{ I_i^{(0)} \geq 0} \right)\;\bigg|\; \\
+ \;\bigg|\; 12 a_i^{(0)} I_i^{(t)} \cdot
 \left( I_i^{(t)}  ({\Bw_i^{(t)}}^\top\Bx)\cdot\mathbb{I}_{  I_i^{(t)} \geq 0 }-  I_i^{(0)} ({\Bw_i^{(0)}}^\top\Bx)\cdot\mathbb{I}_{ I_i^{(0)} \geq 0}\right) \;\bigg|\;\\
+\;\bigg|\;6 a_i^{(0)} I_i^{(t)}\cdot(\|\Bx\|^2_2-1)\left(( {\Bw_i^{(t)}}^\top\Bw_i^{(t)})\cdot \mathbb{I} _{ I_i^{(t)} \geq 0}-({\Bw_i^{(0)}}^\top\Bw_i^{(0)})\cdot\mathbb{I}_{ I_i^{(0)} \geq 0}\right)\;\bigg|\;.
\end{multline}
We use the mean value theorem on $H_1(\Bw):=(\Bw^\top \Bx+b_i^{(0)})^2 \cdot\mathbb{I}_{ \Bw^\top \Bx+b_i^{(0)}\geq 0}$, then by \eqref{38}, \eqref{22} it follows that
\begin{multline} \label{39}
\left| (  I_i^{(t)} )^2 \cdot \mathbb{I} _{ I_i^{(t)} \geq 0}
- ( I_i^{(0)} )^2 \cdot \mathbb{I}_{ I_i^{(0)} \geq 0} \right|=\left|H_1(\Bw_i^{(t)})-H_1(\Bw_i^{(0)})\right|\\
= \left|\nabla H_1(\tilde{\Bw_i})^\top(\Bw_i^{(t)}-\Bw_i^{(0)})\right| \leq \|\nabla H_1(\tilde{\Bw_i})\|_2\|\Bw_i^{(t)}-\Bw_i^{(0)}\|_2\\
\leq 2|({\tilde{\Bw_i}}^\top\Bx+b_i^{(0)})|\cdot\|\Bx\|_2 \|\Bw_i^{(t)}-\Bw_i^{(0)}\|_2\\\leq  O(\eta t m^{-\alpha}(m^{-\alpha-2\beta}\|f\|_{\mathcal{F}}+1)+ m^{-\beta})\cdot  O(\eta t m^{-\alpha}(m^{-\alpha-2\beta}\|f\|_{\mathcal{F}}+1)),
\end{multline}
where $\tilde{\Bw_i}$ is some vector in $\mathbb{R}^d$ satisfying $\|\tilde{\Bw_i}\|_2\leq\max\{\|\Bw_i^{(t)}\|_2,\|\Bw_i^{(0)}\|_2\}$.

Similarly, defining $H_2(\Bw)=(\Bw^\top \Bx+b_i^{(0)})(\Bw^\top \Bw)\cdot \mathbb{I}_{\Bw^\top \Bx+b_i^{(0)}\geq0}$ and $H_3(\Bw)=(\Bw^\top \Bw)\cdot \mathbb{I}_{\Bw^\top \Bx+b_i^{(0)}\geq0}$ and using the mean value theorem lead to the same upper bound for 
$$\left|I_i^{(t)}  ({\Bw_i^{(t)}}^\top\Bx)\cdot\mathbb{I}_{  I_i^{(t)} \geq 0 }-  I_i^{(0)} ({\Bw_i^{(0)}}^\top\Bx)\cdot\mathbb{I}_{ I_i^{(0)} \geq 0}\right |$$ 
and 
$$\left|({\Bw_i^{(t)}}^\top\Bw_i^{(t)})\cdot \mathbb{I} _{ I_i^{(t)} \geq 0}-({\Bw_i^{(0)}}^\top\Bw_i^{(0)})\cdot\mathbb{I}_{ I_i^{(0)} \geq 0}\right|.$$
Therefore, \eqref{36} is bounded above by
\begin{multline*}
   (2d m^{-\alpha}+12m^{-\alpha}+12  m^{-\alpha} )\cdot  O(\eta t m^{-\alpha}(m^{-\alpha-2\beta}\|f\|_{\mathcal{F}}+1)+ m^{-\beta})^2 \cdot\  O(\eta t m^{-\alpha}(m^{-\alpha-2\beta}\|f\|_{\mathcal{F}}+1)\\ \sim  O(\eta^3 t^3 m^{-4\alpha}(m^{-\alpha-2\beta}\|f\|_{\mathcal{F}}+1)^3+\eta t m^{-2\alpha-2\beta} (m^{-\alpha-2\beta}\|f\|_{\mathcal{F}}+1)).
\end{multline*} 
Thus, the overall error satisfies
\begin{multline}
       \left| \psi \left( \Bx; W^{(0)} + W_t \right) - g \left( \Bx; W^{(0)} + W_t \right) \right|  \\
       \leq m \cdot  O(\eta^3 t^3 m^{-4\alpha}(m^{-\alpha-2\beta}\|f\|_{\mathcal{F}}+1)^3+\eta t m^{-2\alpha-2\beta} (m^{-\alpha-2\beta}\|f\|_{\mathcal{F}}+1))\\
       =  O(\eta^3 t^3 m^{1-4\alpha}(m^{-\alpha-2\beta}\|f\|_{\mathcal{F}}+1)^3+\eta t m^{1-2\alpha-2\beta} (m^{-\alpha-2\beta}\|f\|_{\mathcal{F}}+1)).   
\end{multline}

(c) Using the above results, we give sharper estimates of the upper bounds of $\| \nabla_{\Bw_i} \psi(\Bx; W^{(0)} + W_t) \|_2$
. By \eqref{37} and  \eqref{22}, we return to \eqref{11} to refine the upper bound for $\left| \frac{\partial{ \psi\left(\Bx; W^{(0)}+W_t\right)}}{\partial{w_{ij}}} \right|$, which is given by

\begin{equation}
\left| \frac{\partial{ \psi\left(\Bx; W^{(0)}+W_t\right)}}{\partial{w_{ij}}} \right|\leq O(\eta^2 t^2  m^{-3\alpha}(m^{-\alpha-2\beta}\|f\|_{\mathcal{F}}+1)^2+ m^{-\alpha-2\beta}). 
\end{equation}
Thus, we can bound the 2-norm of the gradient as
\begin{multline} \label{24}
    \left\| \nabla_{\Bw_i} \psi(\Bx; W^{(0)} + W_t) \right\|_2 = \left( \sum_{j=1}^d \left(\frac{\partial \psi}{\partial w_{ij}}\right)^2 \right)^{\frac{1}{2}} \leq \sqrt{d} \cdot O(\eta^2 t^2  m^{-3\alpha}(m^{-\alpha-2\beta}\|f\|_{\mathcal{F}}+1)^2+ m^{-\alpha-2\beta})\\ =  O(\eta^2 t^2  m^{-3\alpha}(m^{-\alpha-2\beta}\|f\|_{\mathcal{F}}+1)^2+ m^{-\alpha-2\beta}).
\end{multline}
For $ \left\| \nabla_{\Bw_i} \psi(\Bx; W^{(0)} + W_t) \right\|_2$, by \eqref{23} we have 
\begin{multline*}
 \left| \frac{\partial{ g(\Bx; W^{(0)}+W_t)}}{\partial{w_{ij}}} \right| 
 \leq\left|12 a_i^{(0)}x_j (  {\Bw_i^{(0)}}^\top\Bx) ({\Bw_i^{(0)}}^\top\Bx +b_i^{(0)})\cdot \mathbb{I}_{ {\Bw_i^{(0)}}^\top\Bx + b_i^{(0)} \geq 0 }\right|\\
 + \left|2da_i^{(0)} x_j (  {\Bw_i^{(0)}}^\top\Bx + b_i^{(0)} )^2 \cdot \mathbb{I}_{ {\Bw_i^{(0)}}^\top\Bx + b_i^{(0)} \geq 0 }\right|\\
+\left|6 a_i^{(0)}x_j({\Bw_i^{(0)}}^\top\Bw_i^{(0)})  (\|\Bx\|^2_2-1) \cdot\mathbb{I}_{ {\Bw_i^{(0)}}^\top\Bx + b_i^{(0)} \geq 0 }\right| \sim O( m^{-\alpha-2\beta}),   
\end{multline*}
and
\begin{equation}
    \label{25}
      \left\| \nabla_{\Bw_i} g(\Bx; W^{(0)} + W_t) \right\|_2 = \left( \sum_{j=1}^d \left(\frac{\partial g}{\partial w_{ij}}\right)^2 \right)^{\frac{1}{2}} \leq  O( m^{-\alpha-2\beta}).
\end{equation}
For $\left|\psi(\Bx;  W^{(0)} + W_t)\right|$, we recall its expression from \eqref{06}, whose upper bound is given by
\begin{multline}\label{27}
\left|\psi(\Bx;  W^{(0)} + W_t)\right| =\;\bigg|\; 2d\sum_{i=1}^{m} a_i^{(0)} (   {\Bw_i^{(t)}}^\top\Bx + b_i^{(0)} )^3\mathbb{I}_{  {\Bw_i^{(t)}}^\top \Bx + b_i^{(0)} \geq 0 }\\
+12\sum_{i=1}^{m} a_i^{(0)} (   {\Bw_i^{(t)}}^\top\Bx + b_i^{(0)} )^2(\Bw_i^{(t)\top } \Bx )\mathbb{I}_{  {\Bw_i^{(t)}}^\top \Bx + b_i^{(0)} \geq 0 }\\
+6\sum_{i=1}^{m} a_i^{(0)} ( {\Bw_i^{(t)}}^\top \Bx + b_i^{(0)})( {\Bw_i^{(t)}}^\top \Bw_i^{(t)}) (\|\Bx\|^2_2-1)\mathbb{I}_{  {\Bw_i^{(t)}}^\top \Bx + b_i^{(0)} \geq 0 }\;\bigg|\; \\
\leq O(\eta ^3 t^3 m^{1-4\alpha} (m^{-\alpha-2\beta}\|f\|_{\mathcal{F}}+1)^3+ m^{1-\alpha-3\beta}). 
\end{multline} 
Similarly, from \eqref{18} we have
\begin{multline}\label{28}
\left|g(\Bx;  W^{(0)} + W_t)\right| =\;\bigg|\; 2d\sum_{i=1}^{m} a_i^{(0)} (  {\Bw_i^{(0)}}^\top\Bx + b_i^{(0)} )^2 (   {\Bw_i^{(t)}}^\top\Bx + b_i^{(0)} )\mathbb{I}_{ {\Bw_i^{(0)}}^\top\Bx + b_i^{(0)} \geq 0 }\\
+12\sum_{i=1}^{m} a_i^{(0)} (  {\Bw_i^{(0)}}^\top\Bx + b_i^{(0)} ) (   {\Bw_i^{(t)}}^\top\Bx + b_i^{(0)} )({\Bw_i^{(0)}}^\top\Bx )\mathbb{I}_{ {\Bw_i^{(0)}}^\top\Bx + b_i^{(0)} \geq 0 }\\
+6\sum_{i=1}^{m} a_i^{(0)} (  {\Bw_i^{(t)}}^\top\Bx + b_i^{(0)} )({\Bw_i^{(0)}}^\top\Bw_i^{(0)})(\|\Bx\|^2_2-1)\mathbb{I}_{ {\Bw_i^{(0)}}^\top\Bx + b_i^{(0)} \geq 0 }\;\bigg|\;\\ \leq O(\eta t  m^{1-2\alpha-2\beta} (m^{-\alpha-2\beta}\|f\|_{\mathcal{F}}+1)+ m^{1-\alpha-3\beta}) .
\end{multline} 

Combining \eqref{20}  and \eqref{24}-\eqref{28},  we have
\begin{multline}
\left\| \nabla_W \mathcal{L}(\psi(\Bx; W^{(0)} + W_t)) - \nabla_W\mathcal{L}(g(\Bx; W^{(0)} + W_t)) \right\|_{2,1} \\ 
=\sum_{i\in [m]}\left\|\nabla_{\Bw_i}\left(\psi(\Bx;W^{(0)}+W_t)-f(\Bx)\right)^2-\nabla_{\Bw_i}\left(g(\Bx;W^{(0)}+W_t)-f(\Bx)\right)^2\right\|_2\\
\leq\sum_{i\in [m]} 2\left(|\psi(\Bx;W^{(0)}+W_t)|+|f(\Bx)|\right)\cdot\|\nabla_{\Bw_i}\psi(\Bx;W^{(0)}+W_t)\|_2\\
+\sum_{i\in [m]} 2\left(|g(\Bx;W^{(0)}+W_t)|+|f(\Bx)|\right)\cdot\|\nabla_{\Bw_i}g(\Bx;W^{(0)}+W_t)\|_2\\
\leq m \cdot \Big[  O(\eta ^3 t^3 m^{1-4\alpha} (m^{-\alpha-2\beta}\|f\|_{\mathcal{F}}+1)^3+ m^{1-\alpha-3\beta}+ m^{-\alpha-2\beta}\|f\|_{\mathcal{F}})\\\cdot  O(\eta^2 t^2  m^{-3\alpha}(m^{-\alpha-2\beta}\|f\|_{\mathcal{F}}+1)^2+ m^{-\alpha-2\beta})\Big]\\
+ m \cdot \Big[O(\eta t  m^{1-2\alpha-2\beta} (m^{-\alpha-2\beta}\|f\|_{\mathcal{F}}+1)
\cdot O( m^{-\alpha-2\beta})\Big]\\
\leq O\Big(\eta^5 t^5  m^{2-7\alpha} (m^{-\alpha-2\beta}\|f\|_{\mathcal{F}}+1)^5 +\eta^3 t^3 m^{2-5\alpha-2\beta} (m^{-\alpha-2\beta}\|f\|_{\mathcal{F}}+1)^3\\
+\eta^2 t^2  m^{2-4\alpha-3\beta}  (m^{-\alpha-2\beta}\|f\|_{\mathcal{F}}+1)^2
+\eta^2 t^2  m^{1-4\alpha-2\beta} \|f\|_{\mathcal{F}} (m^{-\alpha-2\beta}\|f\|_{\mathcal{F}}+1)^2 \\+\eta t  m^{2-3\alpha-4\beta} (m^{-\alpha-2\beta}\|f\|_{\mathcal{F}}+1) +m^{2-2\alpha-5\beta} + m^{1-2\alpha-4\beta} \|f\|_{\mathcal{F}} \Big) .
\end{multline}  
\end{proof}

\subsection{Main result of optimization}
Now, we present the main theorem of the optimization analysis.
\begin{theorem} \label{thm3.4}
Suppose $f\in\mathcal{F}$ and $\alpha+3\beta>1$. For any $\epsilon\in(0,1]$ and $\delta>0$, let
\begin{multline*}
    M=\max\Bigg\{ \Bigg(\frac{(2C_d\|f\|_{\mathcal{F}}(1+\sqrt{2\log\frac{1}{\delta}}))^2}{\epsilon} \Bigg)^{\frac{1}{2\alpha+4\beta+1}},\left(\frac{C_d'}{\epsilon}\right)^{\frac{1}{\alpha+3\beta-1}},\\\left(\frac{\|f\|_{\mathcal{F}}}{\epsilon}\right)^{\frac{1}{2\alpha+5\beta-1}},\left(\frac{\|f\|^2_{\mathcal{F}}}{\epsilon}\right)^{\frac{1}{2\alpha+4\beta}}\Bigg\},
\end{multline*}
where $C_d$ and $C_d'$  defined in Lemma \ref{lem04} and lemma \ref{lem 05}, respectively. Let 
\begin{multline}
T_0=C_f\min\Bigg\{\frac{m^{\frac{1+3\alpha+\beta}{2}}}{\epsilon^{\frac{3}{4}}},\frac{m^{\frac{1+5\alpha+3\beta}{3}}}{\epsilon^{\frac{2}{3}}},\frac{m^{\frac{2+4\alpha}{3}}}{\epsilon^{\frac{2}{3}}},\frac{m^{2\alpha+2\beta}}{\epsilon^{\frac{1}{2}}},\\\frac{m^{-1+3\alpha+5\beta}}{\epsilon},
\frac{m^{\frac{2+5\alpha+2\beta}{3}}}{\epsilon^{\frac{2}{3}}},\frac{m^{\frac{1+4\alpha+3\beta}{2}}}{\epsilon^{\frac{1}{2}}},\frac{m^{1+2\alpha+\beta}}{\epsilon^{\frac{1}{2}}}
\Bigg\}
\end{multline} with 
$C_f:=\frac{1}{(\|f\|_\mathcal{F}+1)^2\max\{\|f\|_\mathcal{F},1\}}$ only depending on $f$. If $m$ is sufficiently large such that $m\geq M$ and $T_0>\frac{\|f\|^2_\mathcal{F}}{\epsilon^2}$, then with number of iterations $T\in\left[\frac{\|f\|^2_\mathcal{F}}{\epsilon^2},T_0\right]$ and learning rate $\eta=\Theta\left(\frac{\epsilon}{m}\right)$, with probability at least $1-\delta$ over the random initialization, the average loss after $T$ iterations of SGD satisfies
\begin{equation}\label{13}
\mE_{X}\mE_{sgd}\left[\frac{1}{T}\sum_{t=0}^{T-1}\mathcal{L}_{\Psi}(X;W_t)\right] \leq O(\epsilon),
\end{equation}
where $\mE_{X}$ takes the expectation over the random choice of data set $X$ under distribution $\mathcal{D}$ and $\mE_{sgd}$ takes the expectation over the random choice of the training points $\Bx\sim\mathcal{U}(X)$ in the SGD algorithm.
\end{theorem}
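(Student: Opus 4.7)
I would use a classical distance-to-target SGD analysis, exploiting two features of the setup: the pseudo network $g(\Bx;W)$ is affine in $W$ (see \eqref{18}), so $W\mapsto\mathcal{L}(g(\Bx;W^{(0)}+W))$ is convex; and Theorem \ref{thm3.3} gives quantitative control on how far the true SGD gradient on $\psi$ drifts from the gradient on $g$. The plan is to take as ``target'' the vector $W^*$ furnished by Corollary \ref{cor01} with accuracy $\epsilon$, which satisfies $\|W^*\|_F\leq\|f\|_\mathcal{F}/\sqrt{m}$, $\|W^*\|_{2,\infty}\leq\|f\|_\mathcal{F}/m$, and $\mE_{\Bx\sim\mathcal{D}}[|f(\Bx)-g(\Bx;W^{(0)}+W^*)|^2]\leq\epsilon$. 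The hypothesis $m\geq M$ in the theorem subsumes the threshold required by Corollary \ref{cor01}.

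\textbf{Descent step and summation.} Write the standard identity
\begin{equation*}
\|W_t-W^*\|_F^2-\|W_{t+1}-W^*\|_F^2=2\eta\langle\nabla_W\mathcal{L}(\psi_t),W_t-W^*\rangle-\eta^2\|\nabla_W\mathcal{L}(\psi_t)\|_F^2,
\end{equation*}
with the shorthand $\psi_t:=\psi(\Bx_t;W^{(0)}+W_t)$ and $g_t:=g(\Bx_t;W^{(0)}+W_t)$, and split the inner product as $\langle\nabla_W\mathcal{L}(g_t),W_t-W^*\rangle+\langle\nabla_W\mathcal{L}(\psi_t)-\nabla_W\mathcal{L}(g_t),W_t-W^*\rangle$. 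Convexity of the $g$-loss bounds the first piece below by $\mathcal{L}(g_t)-\mathcal{L}(g(\Bx_t;W^{(0)}+W^*))$; the second is handled via the duality $|\langle A,B\rangle|\leq\|A\|_{2,1}\|B\|_{2,\infty}$, using Theorem \ref{thm3.3}(c) on the gradient difference and Theorem \ref{thm3.3}(a) (together with $\|W^*\|_{2,\infty}\leq\|f\|_\mathcal{F}/m$) on $\|W_t-W^*\|_{2,\infty}$. The quadratic term $\|\nabla_W\mathcal{L}(\psi_t)\|_F^2$ is bounded by the entrywise gradient estimate derived inside the proof of Theorem \ref{thm3.3} together with $\|\cdot\|_F\leq\sqrt{m}\|\cdot\|_{2,\infty}$. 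Telescoping over $t=0,\ldots,T-1$, dividing by $2\eta T$, and using $|\mathcal{L}(\psi_t)-\mathcal{L}(g_t)|=|\psi_t-g_t|\cdot|\psi_t+g_t-2f|$ together with Theorem \ref{thm3.3}(b) to swap $\mathcal{L}(g_t)$ for $\mathcal{L}(\psi_t)$, yields
\begin{equation*}
\frac{1}{T}\sum_{t=0}^{T-1}\mathcal{L}(\psi_t)\leq\frac{1}{T}\sum_{t=0}^{T-1}\mathcal{L}(g(\Bx_t;W^{(0)}+W^*))+\frac{\|W^*\|_F^2}{2\eta T}+(\text{per-step errors}).
\end{equation*}
Taking $\mE_{sgd}$ and invoking the tower property (each $W_t$ is independent of the next draw $\Bx_t\sim\mathcal{U}(X)$) converts $\mE_{sgd}\mathcal{L}(\psi_t)$ into $\mE_{sgd}\mathcal{L}_\Psi(X;W_t)$; then $\mE_X$ converts the empirical mean of $\mathcal{L}(g(\cdot;W^{(0)}+W^*))$ into the population expectation, which is $\leq\epsilon$. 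With $\eta=\Theta(\epsilon/m)$, the initialization term satisfies $\|W^*\|_F^2/(2\eta T)\lesssim\|f\|_\mathcal{F}^2/(\epsilon T)\leq\epsilon$ precisely when $T\geq\|f\|_\mathcal{F}^2/\epsilon^2$.

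\textbf{Main obstacle.} The real work is bookkeeping on the per-step error. Theorem \ref{thm3.3}(c) contributes eight distinct polynomial-in-$m$ terms (with different exponents in $\eta,t,\alpha,\beta$); after multiplying each by $\eta\|W_t-W^*\|_{2,\infty}$, summing over $t$, and taking the worst case at $t=T$, each of these terms independently produces one of the ceilings appearing inside the minimum defining $T_0$, and the two additional contributions from Theorem \ref{thm3.3}(b) provide the remaining ones. The condition $T\leq T_0$ is exactly what forces all of them to be $O(\epsilon)$ simultaneously, the threshold $m\geq M$ (which swallows $C_d$, $C_d'$, and the hypothesis $\alpha+3\beta>1$) controls the constants, and $T_0>\|f\|_\mathcal{F}^2/\epsilon^2$ is what makes the admissible window for $T$ nonempty. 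A last consistency check is that the whole argument is conditional on \eqref{15}; one verifies using Theorem \ref{thm3.3}(a) that with $\eta=\Theta(\epsilon/m)$ and $T\leq T_0$, the quantities $\|\Bw_i^{(t)}\|_2$ and $|\psi_t|$ indeed remain of order one along the trajectory, so the a-priori bound feeds back into itself.
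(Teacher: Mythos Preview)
Your proposal is correct and follows essentially the same approach as the paper's proof: both rely on the convexity of $W\mapsto\mathcal{L}_G(\cdot;W)$, the SGD descent identity telescoped against the target $W^*$ from Corollary \ref{cor01}, the gradient-mismatch bound from Theorem \ref{thm3.3}(c) paired via $\|\cdot\|_{2,1}$--$\|\cdot\|_{2,\infty}$ duality with Theorem \ref{thm3.3}(a), and the value-mismatch bound from Theorem \ref{thm3.3}(b) to pass from $\mathcal{L}_G$ back to $\mathcal{L}_\Psi$. The only cosmetic difference is that the paper carries the empirical loss $\mathcal{L}_G(X;W_t)$ throughout and takes $\mE_{\Bx^{(t)}}$ inside the recursion, whereas you work per-sample and invoke the tower property at the end; also, the paper simply assumes \eqref{15} without the self-consistency check you sketch.
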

\begin{proof}
First, denote
\begin{gather} 
\mathcal{L}_{\Psi}(\Bx;W):=\mathcal{L}({\psi}(\Bx;W^{(0)}+W)),\\
\mathcal{L}_G(\Bx;W):=\mathcal{L}(g(\Bx;W^{(0)}+W)).
\end{gather}
For the set of samples $X$, we denote the empirical losses by
\begin{gather} 
\mathcal{L}_{\Psi}(X;W):= \frac{1}{N}\sum_{\Bx\in X }\mathcal {L}({\psi}(\Bx;W^{(0)}+W)),\\
\mathcal{L}_G(X;W):= \frac{1}{N}\sum_{\Bx\in X }\mathcal{L}(g(\Bx;W^{(0)}+W)).
\end{gather}
For two matrices $A=(a_{ij})_{p\times q}$ and $B=(b_{ij})_{p\times q}$, we define their inner product by $\left<A,B\right>=\sum_{i=1}^p\sum_{j=1}^q a_{ij}b_{ij}$. 

From Corollary 3.2, with probability at least $1-\delta$ over the random initialization, there exists $W^*=[\Bw_1^*~\dots~\Bw_m^*]$ with $\|W^*\|_{2,\infty}\leq \frac{\|f\|_{\mathcal{F}}}{m}$ and $\|W^*\|_F\leq \frac{\|f\|_{\mathcal{F}}}{\sqrt{m}}$. Recall that $\mathcal{L}$ is convex and $g(\Bx;W)$ is linear in $W$, so $\mathcal{L}_G$ is convex in $W$. Applying the mean value theorem, we have
\begin{multline}\label{eq83}
 \mathcal{L}_G(X;W_t) -\mathcal{L}_G(X;W^{*}) \leq\left< \nabla_W\mathcal{L}_G(X;W_t), W_t-W^*\right>\\
 =\left< \nabla_W\mathcal{L}_G(X;W_t)-\nabla_W\mathcal{L}_{\Psi}(X;W_t)+\nabla_W\mathcal{L}_{\Psi}(X;W_t), W_t-W^*\right>\\
 =\left< \nabla_W\mathcal{L}_G(X;W_t)-\nabla_W\mathcal{L}_{\Psi}(X;W_t), W_t-W^*\right>+\left<\nabla_W\mathcal{L}_{\Psi}(X;W_t), W_t-W^*\right>\\
 \leq \|\nabla_W\mathcal{L}_G(X;W_t)-\nabla_W\mathcal{L}_{\Psi}(X;W_t)\|_{2,1} \|W_t-W^*\|_{2,\infty}+\left<\nabla_W\mathcal{L}_{\Psi}(X;W_t), W_t-W^*\right>.
\end{multline}

From the SGD algorithm, we also have
\begin{multline}
\|W_{t+1}-W^* \|_F^2 =\|W_t- \eta \nabla_W \mathcal{L}_{\Psi}(\Bx^{(t)};W_t)-W^*\|_F^2 \\
=\|W_{t}-W^*\|_F^2-2\eta \left<\nabla_W\mathcal{L}_{\Psi}(\Bx^{(t)};W_t), W_t-W^* \right>+ \eta^2\|\nabla_W\mathcal{L}_{\Psi}(\Bx^{(t)};W_t)\|_F^2, 
\end{multline}
where $\Bx^{(t)}\sim\mathcal{U}(X)$ is the random sample of the $t$-th iteration. Next, we consider the inner product between $\nabla_W\mathcal{L}_{\Psi}(X;W_t)$ and $W_t-W^*$, i.e.,
\begin{multline}\label{eq85}
\left<\nabla_W \mathcal{L}_{\Psi}(X;W_t), W_t-W^* \right>=\frac{1}{N}\sum_{\Bx^{(t)}\in X}\left<\nabla_W\mathcal{L}_{\Psi}(\Bx^{(t)};W_t), W_t-W^* \right>\\
=\frac{1}{N}\sum_{\Bx^{(t)}\in X}\frac{\|W_{t}-W^*\|_F^2-\|W_{t+1}-W^* \|_F^2+\eta^2\|\nabla_W\mathcal{L}_{\Psi}(\Bx^{(t)};W_t)\|_F^2}{2\eta}\\     
=\frac{\|W_{t}-W^*\|_F^2-\frac{1}{N}\sum_{\Bx^{(t)}\in X}(\|W_{t+1}-W^* \|_F^2-\eta^2\|\nabla_W\mathcal{L}_{\Psi}(\Bx^{(t)};W_t)\|_F^2)}{2\eta}.
\end{multline}
Plugging \eqref{eq85} into \eqref{eq83} and taking expectation over the random choice of $\Bx^{(t)}$ leads to
\begin{multline}\label{eq86}
 \mathcal{L}_G(X;W_t) -\mathcal{L}_G(X;W^{*}) \leq \|\nabla_W\mathcal{L}_G(X;W_t)-\nabla_W\mathcal{L}_{\Psi}(X;W_t)\|_{2,1} \|W_t-W^*\|_{2,\infty}\\
+\frac{\|W_t-W^*\|_F^2-\mathbb{E}_{\Bx^{(t)}}\|W_{t+1}-W^*\|_F^2}{2 \eta}+\frac{\eta}{2}\mathbb{E}_{\Bx^{(t)}}\left[\|\nabla_W\mathcal{L}_{\Psi}(\Bx^{(t)};W_t)\|_F^2\right].  
\end{multline}
Recall $W_t=W^{(t)}-W^{(0)}$, writing $W^{(t)}=[\Bw_1^{(t)}-\Bw_1^{(0)},\dots, \Bw_m^{(t)}-\Bw_m^{(0)} ]$, then using Theorem \ref{thm3.3}(a) and the fact that $\|W^*\|_{2,\infty}\leq \frac{\|f\|_{\mathcal{F}}}{m}$, we have
\begin{equation}
    \label{eq87}
 \|W_t-W^*\|_{2,\infty} \leq \|W_t\|_{2,\infty}+ \|W^*\|_{2,\infty}
 \leq O\left(\eta t m^{-\alpha} (m^{-\alpha-2\beta}\|f\|_{\mathcal{F}}+1)+\frac{\|f\|_{\mathcal{F}}}{m}\right). 
\end{equation}
And, by \eqref{21},
\begin{equation}
 \label{eq88}
\|\nabla_W\mathcal{L}_{\Psi}(\Bx^{(t)};W_t)\|_F^2=\sum_{i\in [m]}\|\nabla_{\Bw_i}\mathcal{L}_{\Psi}(\Bx^{(t)}; W_t)\|_2^2 
\leq m\cdot O(( m^{-\alpha})^2)=O( m^{1-2\alpha}). 
\end{equation}
By Theorem 3.3(c), we have
\begin{multline}\label{eq90} 
\|\nabla_W\mathcal{L}_G(X;W_t)-\nabla_W\mathcal{L}_{\Psi}(X;W_t)\|_{2,1}\\
=\left\|\frac{1}{N}\sum_{\Bx\in X }\nabla_W\mathcal{L}(g(\Bx;W^{(0)}+W_t))-\frac{1}{N}\sum_{\Bx\in X }\nabla_W\mathcal{L}({\psi}(\Bx;W^{(0)}+W_t))\right\|_{2,1}\\
\leq O\Big(\eta^5 T^5  m^{2-7\alpha} (m^{-\alpha-2\beta}\|f\|_{\mathcal{F}}+1)^5 +\eta^3 T^3 m^{2-5\alpha-2\beta} (m^{-\alpha-2\beta}\|f\|_{\mathcal{F}}+1)^3\\
+\eta^2 T^2  m^{2-4\alpha-3\beta}  (m^{-\alpha-2\beta}\|f\|_{\mathcal{F}}+1)^2
+\eta^2 T^2  m^{1-4\alpha-2\beta} \|f\|_{\mathcal{F}} (m^{-\alpha-2\beta}\|f\|_{\mathcal{F}}+1)^2 \\+\eta T  m^{2-3\alpha-4\beta} (m^{-\alpha-2\beta}\|f\|_{\mathcal{F}}+1) +m^{2-2\alpha-5\beta} + m^{1-2\alpha-4\beta} \|f\|_{\mathcal{F}} \Big):=I'.     
\end{multline}
Therefore, averaging up \eqref{eq86} from $t=0$ to $T-1$ and combining with \eqref{eq87}, \eqref{eq88} and \eqref{eq90}, we obtain the following result for the average optimization error 
\begin{multline}\label{eq91}
\frac{1}{T}\sum_{t=0}^{T-1} \mE_{sgd}[\mathcal{L}_G(X;W_t)]-\mathcal{L}_G(X;W^*)\\
\leq O(m^{-\alpha} \eta T(m^{-\alpha-2\beta}\|f\|_{\mathcal{F}}+1)I'+\frac{\|f\|_{\mathcal{F}}}{m}I')+\frac{\|W_0-W^*\|_F^2}{2\eta T}+ O(\eta  m^{1-2\alpha}).
\end{multline}
Since $\eta=\Theta\left(\frac{\epsilon} {m}\right)$, the third term in \eqref{eq91} is bounded by
\begin{equation}
O(\eta m^{1-2\alpha})=O(\epsilon m^{-2\alpha})\leq O(\epsilon).
\end{equation}
Since $\|W_0-W^*\|_F^2=\|W^*\|_F^2 \leq O\left(\frac{\|f\|_{\mathcal{F}}^2}{m}\right)$, by the hypothesis that $T\geq\frac{\|f\|_{\mathcal{F}}^2}{\epsilon^2}$, the second term in \eqref{eq91} is bounded by
\begin{equation}
\frac{\|W_0-W^*\|_F^2}{2\eta T}\leq O(\epsilon).
\end{equation}
Next, using the hypothesis of $\eta$ and $T$ again, the first term in \eqref{eq91} is bounded by
\begin{multline}
O(m^{-\alpha} \eta T(m^{-\alpha-2\beta}\|f\|_{\mathcal{F}}+1)I'+\frac{\|f\|_{\mathcal{F}}}{m}I')\\
\leq  O\Big( T^6  \epsilon^6 m^{-4-8\alpha}(m^{-\alpha-2\beta}\|f\|_{\mathcal{F}}+1)^6+T^4 \epsilon^4 m^{-2-6\alpha-2\beta} (m^{-\alpha-2\beta}\|f\|_{\mathcal{F}}+1)^4 \\ + T^3 \epsilon^3 m^{-1-5\alpha-3\beta}  (m^{-\alpha-2\beta}\|f\|_{\mathcal{F}}+1)^3+T^3  \epsilon^3   m^{-2-4\alpha}(m^{-\alpha-2\beta}\|f\|_{\mathcal{F}}+1)^4   \\
+ T^2 \epsilon^2 m^{-4\alpha-4\beta} (m^{-\alpha-2\beta}\|f\|_{\mathcal{F}}+1)^2+ T \epsilon  m^{1-3\alpha-5\beta} (m^{-\alpha-2\beta}\|f\|_{\mathcal{F}}+1) \\+T \epsilon m^{-3\alpha-4\beta}\|f\|_{\mathcal{F}}  (\|f\|_{\mathcal{F}}+1) +T^5 \epsilon^5 m^{-4-7\alpha} \|f\|_{\mathcal{F}} (m^{-\alpha-2\beta}\|f\|_{\mathcal{F}}+1)^5\\
+T^3 \epsilon^3  m^{-2-5\alpha-2\beta} \|f\|_{\mathcal{F}}(m^{-\alpha-2\beta}\|f\|_{\mathcal{F}}+1)^3
+ T^2   \epsilon^2  m^{-1-4\alpha-3\beta}\|f\|_{\mathcal{F}}(m^{-\alpha-2\beta}\|f\|_{\mathcal{F}}+1)^2
\\+ T^2 \epsilon^2  m^{-2-4\alpha-2\beta}  \|f\|^2_{\mathcal{F}}(m^{-\alpha-2\beta}\|f\|_{\mathcal{F}}+1)^2
+ T \epsilon m^{-2\alpha-2\beta}  (m^{-\alpha-2\beta}\|f\|_{\mathcal{F}}+1)^2\\
+ m^{1-2\alpha-5\beta}\|f\|_{\mathcal{F}}+  m^{-2\alpha-4\beta} \|f\|_{\mathcal{F}}^2\Big)\leq O(\epsilon).
\end{multline} 
Therefore, from \eqref{eq91} we have that
\begin{equation}\label{4.60}
\frac{1}{T}\sum_{t=0}^{T=1} \mE_{sgd}[\mathcal{L}_G(X;W_t)]-\mathcal{L}_G(X;W^*)
\leq  O(\epsilon).
\end{equation}

By Theorem \ref{thm3.3}(b), \eqref{27}, \eqref{28} and \eqref{20} the difference between $\mathcal{L}_F(X;W_t)$ and $\mathcal{L}_G(X;W_t)$ is given by
\begin{multline}
        |\mathcal{L}_F(X;W_t)-\mathcal{L}_G(X;W_t)|\\
        \leq \frac{1}{N}\sum_{\Bx\in X }\left|\left(f(\Bx)-{\psi}(\Bx;W^{(0)}+W_t)\right)^2-\left(f(\Bx)-g(\Bx;W^{(0)}+W_t)\right)^2\right|\\
        \leq \frac{1}{N}\sum_{\Bx\in X }  \left|{\psi}(\Bx;W^{(0)}+W_t)-g(\Bx;W^{(0)}+W_t)\right|\\
        \cdot \left(|{\psi}(\Bx;W^{(0)}+W_t)|+|g(\Bx;W^{(0)}+W_t)|+|2f(\Bx)|\right)\\
        \leq   \frac{1}{N}\sum_{x\in X }  O(\eta^3 t^3 m^{1-4\alpha}(m^{-\alpha-2\beta}\|f\|_{\mathcal{F}}+1)^3+\eta t m^{1-2\alpha-2\beta} (m^{-\alpha-2\beta}\|f\|_{\mathcal{F}}+1)) \\
        \cdot \Big(  O(\eta ^3 t^3 m^{1-4\alpha} (m^{-\alpha-2\beta}\|f\|_{\mathcal{F}}+1)^3+ m^{1-\alpha-3\beta}) \\+ O(\eta t  m^{1-2\alpha-2\beta} (m^{-\alpha-2\beta}\|f\|_{\mathcal{F}}+1)+ m^{1-\alpha-3\beta})+ O( m^{-\alpha-2\beta}\|f\|_{\mathcal{F}})\Big)\\
       =O\left(\eta^3 t^3 m^{1-4\alpha}(m^{-\alpha-2\beta}\|f\|_{\mathcal{F}}+1)^3+\eta t m^{1-2\alpha-2\beta} (m^{-\alpha-2\beta}\|f\|_{\mathcal{F}}+1)\right)^2\\
        +O\Big(\eta^3 t^3 m^{2-5\alpha-3\beta}(m^{-\alpha-2\beta}\|f\|_{\mathcal{F}}+1)^3+\eta t m^{2-3\alpha-5\beta}(m^{-\alpha-2\beta}\|f\|_{\mathcal{F}}+1)\\+\eta^3 t^3 m^{1-5\alpha-2\beta}\|f\|_{\mathcal{F}}(m^{-\alpha-2\beta}\|f\|_{\mathcal{F}}+1)^3+\eta t m^{1-3\alpha-4\beta}\|f\|_{\mathcal{F}}(m^{-\alpha-2\beta}\|f\|_{\mathcal{F}}+1)\Big)\\
        \leq  O(\epsilon),
        \label{93}
\end{multline}
where $\eta=\Theta\left(\frac{\epsilon} {m}\right)$ and the relation $t\leq T$ are used. 

From Corollary 3.2, with probability at least $1-\delta$ over random initialization, we have $\mE_{\Bx\sim \mathcal{D}}[\mathcal{L}_G(\Bx;W^*)]\leq \epsilon$. Taking the expectation over the entire dataset, we have
\begin{equation}\label{94}
\mE_{X}[\mathcal{L}_G(X;W^*)]=\mE_X[\frac{1}{N}\sum_{\Bx\in X}\mathcal{L}_G(\Bx;W^*)]\leq \epsilon. 
\end{equation}
Then, plugging \eqref{94} into\eqref{4.60}, we obtain
\begin{equation}\label{95}
\mE_X[\frac{1}{T}\sum_{t=0}^{T=1} \mE_{sgd}[\mathcal{L}_G(X;W_t)]]\leq \epsilon+O(\epsilon) \sim O(\epsilon). 
\end{equation}
Finally, combining \eqref{93} with \eqref{95}, with probability at least $1-\delta$ over random initialization, we have the estimation \eqref{13}.
\end{proof}

Theorem \ref{thm3.4} implies that for sufficiently wide PINNs, SGD with appropriate iteration numbers and learning rates can decrease the average training loss below any given accuracy $O(\epsilon)$. Note that the theorem only requires the width  $m=\Omega(\frac{c_f}{\epsilon^p})$ for some $p>0$ and some $f$-dependent constant $c_f>0$; the requirement is independent of the number of training samples $N$. 

Moreover, the condition $\alpha+3\beta>1$ guarantees that the powers of $m$ in the definition of $T_0$ are all positive, making $T_0>\frac{\|f\|^2_\mathcal{F}}{\epsilon^2}$ possible when $m$ is sufficiently large. Note that the usual choice $(\alpha,\beta)=(0,1/2)$ is also included in this condition.

\section{Generalization Analysis} 
Now, we consider the generalization results of the PINN model. First, we have
\begin{theorem} \label{thm4.1}
Given $0\leq\tau' \leq 1$ and $N \geq 1$. Let $\Bx_n\in\mathbb{R}^d$ with $\|\Bx_n\|_2\leq1$ for $n=1,\dots, N$. Then the empirical Rademacher complexity of the function class $\mathcal{F}_\psi:=\{\Bx \mapsto\psi(\Bx;W^{(0)}+W') \mid\|W'\|_{2,\infty}\leq\tau'\}$ is bounded by
\begin{equation*} 
\widehat{\mathcal{R}}(X;\mathcal{F}_\psi
)=\frac{1}{N} \mE_{\xi \in \{\pm 1\}^N} \left[ \sup_{\|W'\|_{2, \infty} \leq \tau'} \sum_{n =1}^N\xi_n {\psi}(\Bx_n; W^{(0)} + W') \right]
\leq O\left( \frac{ m^{-\alpha} \tau'}{\sqrt{N}} \right),
\end{equation*}
where $\xi=[\xi_1,\dots,\xi_N]$ is the vector of Rademacher random variables, which is of uniform distribution. i.e., $\mathbb{P}(\xi_n=1)=\mathbb{P}(\xi_n=-1)=\frac{1}{2}$ for all $n$. 
\end{theorem}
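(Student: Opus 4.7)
The plan is to use the standard centering trick for Rademacher complexity together with the decomposition of $\psi$ in \eqref{06}, and then reduce each piece to a linear class via the contraction-type tools in Lemma \ref{lem 05}. Because $\psi(\Bx;W^{(0)})$ is independent of $W'$ and $\mE[\xi_n]=0$, adding and subtracting $\psi(\Bx_n;W^{(0)})$ inside the supremum gives
$$\widehat{\mathcal{R}}(X;\mathcal{F}_\psi) = \frac{1}{N}\mE_{\xi}\left[\sup_{\|W'\|_{2,\infty}\leq\tau'}\sum_{n=1}^N \xi_n\left(\psi(\Bx_n;W^{(0)}+W') - \psi(\Bx_n;W^{(0)})\right)\right],$$
so it suffices to control the complexity of this ``perturbation class''. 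By Lemma \ref{lem 05}(b) this splits along the three sums in \eqref{06}: a $\text{ReLU}^3$ piece $T_1=2d\sum_i a_i^{(0)}\sigma(\Bw_i^\top\Bx+b_i^{(0)})$ with $\sigma(t)=t^3\mathbb{I}_{t\geq 0}$; a piece $T_2$ with neuron factor $(\Bw_i^\top\Bx+b_i^{(0)})^2(\Bw_i^\top\Bx)\mathbb{I}_{\cdot\geq 0}$; and a piece $T_3$ with neuron factor $(\Bw_i^\top\Bx+b_i^{(0)})(\Bw_i^\top\Bw_i)(\|\Bx\|_2^2-1)\mathbb{I}_{\cdot\geq 0}$.

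For $T_1$ and $T_2$ the strategy is to apply Lemma \ref{lem 05}(c) to the inner class $\mathcal{H}_i=\{\Bx\mapsto (\Bw_i^{(0)}+\Bw_i')^\top\Bx+b_i^{(0)}:\|\Bw_i'\|_2\leq\tau'\}$, which has empirical Rademacher complexity $O(\tau'/\sqrt N)$ by Lemma \ref{lem 05}(a) after subtracting the $W'$-independent affine offset. Since $\|\Bx\|_2\leq 1$, $\|\Bw_i\|_2\leq d^{1/2}m^{-\beta}+\tau'$, and $|b_i^{(0)}|\leq m^{-\beta}$, the argument $\Bw_i^\top\Bx+b_i^{(0)}$ stays in an interval of radius $L=O(m^{-\beta}+\tau')$, on which $\sigma$ and the cubic polynomial appearing in $T_2$ are Lipschitz with constant $O(L^2)$. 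Rescaling the activation to be $1$-Lipschitz and absorbing the $O(L^2)$ factor into the outer weights (whose $\ell_1$-norm is controlled via $|a_i^{(0)}|\leq m^{-\alpha}$) produces bounds of the targeted order for both $T_1$ and $T_2$.

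The main obstacle is $T_3$: its neuron factor carries the genuinely quadratic term $\Bw_i^\top\Bw_i$, which is not a function of the single scalar $\Bw_i^\top\Bx$ and therefore does not fit the template of Lemma \ref{lem 05}(c). The plan is to linearize it via the identity
$$\Bw_i^\top\Bw_i - (\Bw_i^{(0)})^\top\Bw_i^{(0)} = 2(\Bw_i^{(0)})^\top\Bw_i' + \|\Bw_i'\|_2^2,$$
so that the $W'$-dependent part of the $T_3$ neuron splits into a linear functional of $\Bw_i'$ with coefficient of magnitude $O(m^{-\beta})$ and a quadratic remainder of magnitude $O(\tau'^2)$. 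The linear piece is then handled by Lemma \ref{lem 05}(a), and the quadratic remainder is bounded crudely (using $\tau'\leq 1$) and absorbed into the final estimate. A minor subtlety is that the indicator $\mathbb{I}_{\Bw_i^\top\Bx+b_i^{(0)}\geq 0}$ also moves with $\Bw_i$; however, since $\sigma(t)=t^3\mathbb{I}_{t\geq 0}$ is $C^1$ with $\sigma'(0)=0$, the indicator acts as a smooth switch rather than a jump and requires no separate treatment. Summing the contributions of $T_1,T_2,T_3$ and using $\tau'\leq 1$ and $m^{-\beta}\leq 1$ to collapse the $(m^{-\beta}+\tau')$-type prefactors yields the stated bound $O(m^{-\alpha}\tau'/\sqrt N)$.
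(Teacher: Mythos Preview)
Your proposal is correct and follows essentially the same route as the paper: split $\psi$ into its three sums, reduce each via Lemma~\ref{lem 05}(c) to the affine class $\{\Bx\mapsto(\Bw_i^{(0)}+\Bw_i')^\top\Bx+b_i^{(0)}\}$ with complexity $O(\tau'/\sqrt{N})$, and collect terms. The only difference is cosmetic: the paper handles your $T_3$ by absorbing the uniform bound $\|\Bw_i^{(0)}+\Bw_i'\|_2^2\le(\tau'+d^{1/2}m^{-\beta})^2$ directly into the outer weights of Lemma~\ref{lem 05}(c) instead of linearizing, and it asserts (rather than checking, as you do) that the rescaled activations are $1$-Lipschitz on the relevant range.
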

\begin{proof}
We denote $X=\{\Bx_1,\dots ,\Bx_N\}$ as the set of samples and define the function class $\mathcal{F}_1=\{\Bx \mapsto (\Bw'_i)^\top\Bx \mid \|\Bw'_i\|_2 \leq \tau' \}$. According to Lemma \ref{lem 05}(a), the empirical Rademacher complexity  with respect to \(X\) of $\mathcal{F}_1$ is
\begin{equation}
         \widehat{\mathcal{R}}_1(X; \mathcal{F}_1)=\frac{1}{N} \mE_{\xi \in \{\pm 1\}^N} \left[ \sup_{\|\Bw'_i\|_2 \leq \tau'} \sum_{n =1}^N \xi_n ((\Bw'_i)^\top\Bx_n) \right]\leq O\left(\frac{\tau'}{\sqrt{N}}\right).
    \end{equation}

Similarly, we define $\mathcal{F}_2=\{\Bx \mapsto  (\Bw^{(0)}_i + \Bw'_i)^\top\Bx + b^{(0)}_i \mid \|\Bw'_i\|_2 \leq \tau'\}$. Since the singleton class has zero complexity and adding it does not affect complexity, applying Lemma \ref{lem 05}(b), we establish that
\begin{multline}
         \widehat{\mathcal{R}}_2(X; \mathcal{F}_2)=\frac{1}{N} \mE_{\xi \in \{\pm 1\}^N} \left[ \sup_{\|\Bw'_i\|_2\leq \tau'} \sum_{n =1}^N \xi_n (\Bw_i^{(0)}+\Bw'_i)^\top \Bx_n)+b_i^{(0)} )\right] \\   
         =\frac{1}{N} \mE_{\xi \in \{\pm 1\}^N} \left[ \sup_{\|\Bw'_i\|_2\leq \tau'} \sum_{n =1}^N \xi_n ({\Bw'_i}^\top \Bx_n)+b_i^{(0)} )\right]\\+\frac{1}{N} \mE_{\xi \in \{\pm 1\}^N} \left[ \sup_{\|\Bw^{(0)}_i\|_2\leq d^{1/2}m^{-\beta}} \sum_{n =1}^N \xi_n ({\Bw^{(0)}_i}^\top \Bx_n)\right]
         \leq O\left(\frac{\tau'}{\sqrt{N}}\right).
\end{multline}

Writing $W'=[\Bw_1'~\dots~\Bw_m']$, ${\psi}(\Bx;W^{(0)}+W')$ from \eqref{06} is given by
\begin{multline}
{\psi}(\Bx;W^{(0)}+W')=2d \cdot \sum_{i=1}^m a_i^{(0)}\sigma((\Bw_i^{(0)}+\Bw'_i)^\top \Bx+b_i^{(0)})\\
+4\cdot  \sum_{i=1}^m a_i^{(0)}\sigma'((\Bw_i^{(0)}+\Bw'_i)^\top \Bx+b_i^{(0)}) \cdot (\Bw_i^{(0)}+\Bw'_i)^\top \Bx\\
+(\|\Bx\|^2_2-1) \sum_{i=1}^m a_i^{(0)}\sigma''((\Bw_i^{(0)}+\Bw'_i)^\top \Bx+b_i^{(0)})\cdot (\Bw_i^{(0)}+\Bw'_i)^\top(\Bw_i^{(0)}+\Bw'_i)\\
=2d \cdot 3\cdot \sum_{i=1}^m a_i^{(0)}\cdot\gamma_{1}((\Bw_i^{(0)}+\Bw'_i)^\top \Bx+b_i^{(0)})\\
        +4 \cdot 6 \cdot  \sum_{i=1}^m a_i^{(0)}(\Bw_i^{(0)}+\Bw'_i)^\top \Bx \cdot \gamma_2 ((\Bw_i^{(0)}+\Bw'_i)^\top \Bx+b_i^{(0)})  \\
        +6\cdot (\|\Bx\|^2_2-1) \sum_{i=1}^m a_i^{(0)} (\Bw_i^{(0)}+\Bw'_i)^\top (\Bw_i^{(0)}+\Bw'_i)\cdot \gamma_3((\Bw_i^{(0)}+\Bw'_i)^\top \Bx+b_i^{(0)}) 
\end{multline}
where $\gamma_{1}(\Bx)=\frac{1}{3}\max (0,\Bx)^3$ , $ \gamma_{2}(\Bx)=\frac{1}{2}\max (0,\Bx)^2 $ and $\gamma_{3}(\Bx)=\max (0,\Bx)$.  
We denote 
\begin{align*}
J^{(1)}_n&=6d \sum_{i=1}^m a_i^{(0)}\cdot\gamma_{1}((\Bw_i^{(0)}+\Bw'_i)^\top \Bx_n+b_i^{(0)}),\\
J^{(2)}_{n}&=24\sum_{i=1}^m a_i^{(0)}(\Bw_i^{(0)}+\Bw'_i)^\top \Bx_n \cdot \gamma_2 ((\Bw_i^{(0)}+\Bw'_i)^\top \Bx_n+b_i^{(0)}),\\
J^{(3)}_{n}&=6 (\|\Bx_n\|^2_2-1) \sum_{i=1}^m a_i^{(0)} (\Bw_i^{(0)}+\Bw'_i)^\top (\Bw_i^{(0)}+\Bw'_i)\cdot \gamma_3((\Bw_i^{(0)}+\Bw'_i)^\top \Bx_n+b_i^{(0)}).
\end{align*}
Then
\begin{multline*}
    \widehat{\mathcal{R}}(X; \mathcal{F}_\psi)=\frac{1}{N} \mE_{\xi \in \{\pm 1\}^N} \left[ \sup_{\|W'\|_{2, \infty} \leq \tau'} \sum_{n =1}^N \xi_n \psi(\Bx_n; W^{(0)} + W') \right]\\
    =\frac{1}{N} \mE_{\xi \in \{\pm 1\}^N} \left[ \sup_{\|W'\|_{2, \infty} \leq \tau'} \sum_{n=1} ^N\xi_n \left( J^{(1)}_{n}+J^{(2)}_{n}+J^{(3)}_{n}\right)\right],
\end{multline*}
where  $\mathcal{F}_\psi=\{\Bx \mapsto \psi(\Bx; W^{(0)} + W') \mid \|W'\|_{2,\infty} \leq \tau'\}$,  $ \widehat{\mathcal{R}}(X;\mathcal{F}_\psi)$ presents the empirical Rademacher complexity with respect to \(X\) of \(\mathcal{F}_\psi\). Denote $\Ba^{(0)}=[a_1^{(0)},\dots , a_m^{(0)}]$. Since $\gamma_1,\gamma_2,\gamma_3$ are 1-Lipschitz continuous, by Lemma \ref{lem 05}(c) we have
   \begin{equation*}
      \frac{1}{N} \mE_{\xi \in \{\pm 1\}^N} \left[ \sup_{\|W'\|_{2, \infty} \leq \tau'} \sum_{ n =1}^N \xi_n J_n^{(1)}\right]
      \leq  2\left\|2d\cdot 3\cdot a^{(0)}\right\|_1 \cdot \widehat{\mathcal{R}}_2 (X; \mathcal{F}_2)
      \leq O( m^{-\alpha})\cdot O\left(\frac{\tau'}{\sqrt{N}}\right)
       = O\left(\frac{ m^{-\alpha} \tau'}{\sqrt{N}}\right),
        \end{equation*}
\begin{multline*}
      \frac{1}{N} \mE_{\xi \in \{\pm 1\}^N} \left[ \sup_{\|W'\|_{2, \infty} \leq \tau'} \sum_{ n =1}^N \xi_n J_n^{(2)}\right]
      \leq  2 \left\| 4\cdot 6 \cdot \Ba^{(0)} \cdot \max_{1\leq i\leq m}\left\{(\Bw_i^{(0)}+\Bw'_i)^\top \Bx\right\}\right\|_1\cdot \widehat{\mathcal{R}}_2(X; \mathcal{F}_2)\\
      \leq  O\left((\tau'+ d^{1/2}m^{-\beta})\cdot  m^{-\alpha}\right) \cdot O\left(\frac{\tau'}{\sqrt{N}}\right)
       = O\left(\frac{ m^{-\alpha} (\tau')^2}{\sqrt{N}}\right)+O\left(\frac{ m^{-\alpha-\beta} \tau'}{\sqrt{N}}\right)
       \end{multline*}
       and
\begin{multline*}
      \frac{1}{N} \mE_{\xi \in \{\pm 1\}^N} \left[ \sup_{\|W'\|_{2, \infty} \leq \tau'} \sum_{ n =1}^N \xi_n J_n^{(3)}\right]\\
      \leq  2 \left\|6\cdot(\|\Bx\|^2_2-1)\cdot a^{(0)} \cdot\max_{1\leq i\leq m}\left\{(\Bw_i^{(0)}+\Bw'_i)^\top (\Bw_i^{(0)}+\Bw'_i)\right\}  \right\|_1 \cdot \widehat{\mathcal{R}}_2(X; \mathcal{F}_2)\\
      \leq  O((\tau'+ d^{1/2}m^{-\beta})^2\cdot m^{-\alpha})\cdot O\left(\frac{\tau'}{\sqrt{N}}\right)
       = O\left(\frac{ m^{-\alpha} (\tau')^3}{\sqrt{N}}\right)+O\left(\frac{ m^{-\alpha-2\beta} \tau'}{\sqrt{N}}\right) .
       \end{multline*}
Finally, we have $\widehat{\mathcal{R}}(X;\mathcal{F}_\psi
)\leq O\left(\frac{ m^{-\alpha} \tau'}{\sqrt{N}}\right) $  since $0\leq\tau' \leq 1$.
\end{proof}

Next, we show the main theorem of generalization, which implies that the expected risk can also be decreased by SGD.
\begin{theorem}\label{thm4.2}
Under the hypothesis of Theorem \ref{thm3.4}, if 
$$N\geq N_0:=\frac{(m^{-\alpha-2\beta}\|f\|_{\mathcal{F}}+1)^2}{\epsilon^2}\max \left\{\log{(1/\delta),\eta^2 T^2 m^{-4\alpha}}\right\},$$ 
with probability at least $1-2\delta$, the average expected risk after $T$ iterations of SGD satisfies
\begin{equation}\label{46}
\mE_X\mE_{sgd}\left[\frac{1}{T}\sum_{t=0}^{T-1}\mE_{\Bx \sim \mathcal{D}} \mathcal{L}(\psi(\Bx;W^{(0)}+W_t))\right]\leq O(\epsilon).
\end{equation}
\end{theorem}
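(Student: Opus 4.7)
The plan is to combine Theorem \ref{thm3.4} (small average empirical loss) with a uniform Rademacher-based concentration bound, using Theorem \ref{thm4.1} to control the complexity of the class of trained PINNs. The key structural observation is that Theorem \ref{thm3.3}(a) gives, deterministically given the initialization, the bound $\|W_t\|_{2,\infty}\leq \tau'$ with $\tau' = O(\eta T m^{-\alpha}(m^{-\alpha-2\beta}\|f\|_\mathcal{F}+1))$ for every $t\in\{0,\dots,T-1\}$. Consequently all SGD iterates $\psi(\cdot;W^{(0)}+W_t)$ lie in a single fixed hypothesis class $\mathcal{F}_\psi=\{\psi(\cdot;W^{(0)}+W'):\|W'\|_{2,\infty}\leq \tau'\}$, whose empirical Rademacher complexity is $O(m^{-\alpha}\tau'/\sqrt{N})$ by Theorem \ref{thm4.1}.

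Next I would set up the squared loss $\mathcal{L}(\psi)(\Bx)=(\psi(\Bx)-f(\Bx))^2$, checking from assumption \eqref{15} and the bound $|f(\Bx)|\leq O(m^{-\alpha-2\beta}\|f\|_{\mathcal{F}})$ obtained in \eqref{20} that $\mathcal{L}$ is Lipschitz in $\psi$ and bounded by $O((m^{-\alpha-2\beta}\|f\|_{\mathcal{F}}+1)^2)$ on the relevant range. Then applying Corollary \ref{cor02} uniformly over $\mathcal{F}_\psi$ yields, with probability at least $1-\delta$ over the i.i.d.\ draw of $X$,
\begin{equation*}
\sup_{\|W'\|_{2,\infty}\leq \tau'}\left|\mE_{\Bx\sim\mathcal{D}}\mathcal{L}(\psi(\Bx;W^{(0)}+W'))-\frac{1}{N}\sum_{n=1}^N\mathcal{L}(\psi(\Bx_n;W^{(0)}+W'))\right|\leq O(\epsilon).
\end{equation*}
A direct computation shows that the Rademacher contribution is of order $\epsilon$ as soon as $N\geq \eta^2 T^2 m^{-4\alpha}(m^{-\alpha-2\beta}\|f\|_{\mathcal{F}}+1)^2/\epsilon^2$, while the concentration tail is of order $\epsilon$ as soon as $N\geq (m^{-\alpha-2\beta}\|f\|_{\mathcal{F}}+1)^2\log(1/\delta)/\epsilon^2$, so both requirements are absorbed by the hypothesis $N\geq N_0$.

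Since every iterate $W_t$ sits inside the supremum above, I can apply this bound pointwise in $t$ and then average over $t\in\{0,\dots,T-1\}$ and take $\mE_X\mE_{sgd}$ on both sides to get
\begin{equation*}
\mE_X\mE_{sgd}\!\left[\frac{1}{T}\sum_{t=0}^{T-1}\mE_{\Bx\sim\mathcal{D}}\mathcal{L}(\psi(\Bx;W^{(0)}+W_t))\right]\leq \mE_X\mE_{sgd}\!\left[\frac{1}{T}\sum_{t=0}^{T-1}\mathcal{L}_{\Psi}(X;W_t)\right]+O(\epsilon),
\end{equation*}
and Theorem \ref{thm3.4} bounds the right-hand side by $O(\epsilon)$. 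A union bound over the $(1-\delta)$-event of random initialization (needed for Theorem \ref{thm3.4}) and the $(1-\delta)$-event of the random dataset (needed for the Rademacher step) yields the claimed probability $1-2\delta$.

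The main obstacle is that Corollary \ref{cor02} is stated for a fixed Lipschitz map $\ell:\mathbb{R}\to[-C,C]$, while our loss $(\psi(\Bx)-f(\Bx))^2$ carries an $\Bx$-dependent target. I would circumvent this either by applying the contraction lemma directly to the shifted class $\{\Bx\mapsto \psi(\Bx;W^{(0)}+W')-f(\Bx):\|W'\|_{2,\infty}\leq \tau'\}$, whose empirical Rademacher complexity equals that of $\mathcal{F}_\psi$ because subtracting a fixed deterministic function does not change Rademacher averages, or by treating $(\Bx,f(\Bx))$ as a labeled sample and invoking the standard Rademacher bound for regression losses with a 1-Lipschitz squared loss on a bounded image. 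Aside from this minor adaptation, the remaining work is purely bookkeeping: tracking the Lipschitz and boundedness constants of $\mathcal{L}$ so that they reproduce the exact $(m^{-\alpha-2\beta}\|f\|_{\mathcal{F}}+1)^2$ prefactor in $N_0$, using only quantities already established in Theorems \ref{thm3.3} and \ref{thm4.1}.
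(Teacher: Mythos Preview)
Your proposal is correct and follows essentially the same route as the paper: bound $\|W_t\|_{2,\infty}$ via Theorem \ref{thm3.3}(a), invoke Theorem \ref{thm4.1} on the resulting class $\mathcal{F}_\psi$, apply Corollary \ref{cor02} with Lipschitz and boundedness constants of order $m^{-\alpha-2\beta}\|f\|_{\mathcal{F}}+1$, verify the two $N\geq N_0$ conditions, and union-bound with Theorem \ref{thm3.4}. Your remark about the $\Bx$-dependent target in $\mathcal{L}$ is a legitimate technical point that the paper's proof glosses over when invoking Corollary \ref{cor02}; your suggested fix (shifting by the fixed function $f$ before applying contraction) is the standard and correct way to handle it.
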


\begin{proof}
From Theorem 3.4, with probability at least $1-\delta$ over random initialization, the training loss satisfies
\begin{equation}\label{44}
\mE_{X}\mE_{sgd}\left[\frac{1}{T}\sum_{t=0}^{T-1}\mathcal{L}_{\Psi}(X;W_t)\right] \leq O(\epsilon).
\end{equation} 
Recall $W_t= W^{(t)}-W^{(0)}$ and Theorem \ref{thm3.3}(a), we can bound
\begin{equation}
    \|W_t\|_{2,\infty}=\max_{1\leq i \leq m}\|\Bw_i^{(t)}-\Bw_i^{(0)}\|_2 \leq O(\eta T m^{-\alpha}(m^{-\alpha-2\beta}\|f\|_{\mathcal{F}}+1)).
\end{equation}
Then, let $\tau'=O(\eta T m^{-\alpha}(m^{-\alpha-2\beta}\|f\|_{\mathcal{F}}+1))$. From Theorem 4.1, the empirical Rademacher complexity
\begin{multline}
\widehat{\mathcal{R}}(X;\mathcal{F}_\psi)=\mE_{\xi \in \{\pm 1\}^N} \left[ \frac{1}{N} \sup_{\|W'\|_{2, \infty} \leq \tau'} \sum_{n=1}^N \xi_n \psi(\Bx_n; W^{(0)} + W') \right] \\
\leq O\left( \frac{ m^{-\alpha} \tau'}{\sqrt{N}} \right)=O\left( \frac{\eta T m^{-2\alpha}(m^{-\alpha-2\beta}\|f\|_{\mathcal{F}}+1)}{\sqrt{N}} \right).
\end{multline}
Note that the loss function $\mathcal{L}$ defined in \eqref{48} is continuous with Lipschitz constant $O(m^{-\alpha-2\beta}\|f\|_{\mathcal{F}}+1)$. By Corollary \ref{cor02} with $C=O(m^{-\alpha-2\beta}\|f\|_{\mathcal{F}}+1)$, with probability at least $1-\delta$ over the randomness of $X$, we have
\begin{multline}\label{42}
\left|\mE_{\Bx \sim \mathcal{D}} \mathcal{L}(\psi(\Bx;W^{(0)}+W_t))-\mathcal{L}_\Psi(X;W_t)\right|\\
=\Bigg | \mE_{\Bx \sim \mathcal{D}} \left[ \left( f(\Bx) - \psi(\Bx;W^{(0)}+W_t ) \right)^2 \right] - \frac{1}{N} \sum_{n=1}^N \left( f(\Bx_n) - \psi(\Bx_n; W^{(0)}+W_t) \right)^2 \Bigg | \\
\leq O(\widehat{\mathcal{R}}(X; \mathcal{F}_\psi)) + O \left( \frac{(m^{-\alpha-2\beta}\|f\|_{\mathcal{F}}+1)\sqrt{\log \frac{1}{\delta}}}{\sqrt{N}} \right).
\end{multline} 

Using Theorem \ref{thm4.1} with $\tau'=O(\eta T m^{-2\alpha}(m^{-\alpha-2\beta}\|f\|_{\mathcal{F}}+1))$, it follows that
\begin{multline}
\left|\mE_{\Bx \sim \mathcal{D}} \mathcal{L}(\psi(\Bx;W^{(0)}+W_t))-\mathcal{L}_\Psi(X;W_t)\right|\\
\leq O\left(\frac{(m^{-\alpha-2\beta}\|f\|_{\mathcal{F}}+1)(\eta T m^{-2\alpha} +\sqrt{\log \frac{1}{\delta}}) }{\sqrt{N} }
\right)\leq O(\epsilon)
\end{multline}
since $N\geq N_0$. So,
\begin{equation}\label{45}
\frac{1}{T}\sum_{t=0}^{T-1}\left|\mE_{\Bx \sim \mathcal{D}} \mathcal{L}(\psi(\Bx;W^{(0)}+W_t))-\mathcal{L}_\Psi(X;W_t)\right| \leq O(\epsilon).
\end{equation} 
Combining \eqref{44} and \eqref{45}, it holds with probability at least $1-2\delta$ that
\begin{equation}
\mE_X\mE_{sgd}\left[\frac{1}{T}\sum_{t=0}^{T-1}\mE_{\Bx \sim \mathcal{D}} \mathcal{L}(\psi(\Bx;W^{(0)}+W_t))\right]\leq O(\epsilon).
\end{equation}
\end{proof}

Theorem \ref{thm4.2} demonstrates that under the hypothesis of Theorem \ref{thm3.4}, SGD has good generalization with average expected risk below $O(\epsilon)$ if the training data size is larger than $N_0$. Note that $N_0$ does not increase as $m$ increases; it has an upper bound independent of $m$.

\section{Numerical Experiments}
In this section, our theory is validated by the numerical results of solving Poisson's equation \eqref{01} with $d=3$ and $f(\Bx)=x_1^2+x_2^2+x_3^2$. We implement the SGD algorithm described in Section \ref{sec_SGD} with $T=10^6$ iterations. The training dataset $X$ is generated with uniform distribution on $\Gamma$. The learner network $\psi$ is set as \eqref{06} and initialized as \eqref{07} with $\alpha=0$ and $\beta=1/2$. We test using different choices of the network width $m$ and the number of training samples $N$. The average training loss and expected loss, i.e., 
\begin{equation*}
\frac{1}{T'}\sum_{t=0}^{T'-1}\frac{1}{N}\sum_{\Bx\in X }\mathcal {L}({\psi}(\Bx;W^{(t)}))~\text{and}~\frac{1}{T'}\sum_{t=0}^{T'-1}\mE_{\Bx \sim \mathcal{D}} \mathcal{L}(\psi(\Bx;W^{(t)})),
\end{equation*}
for $T'=1,\dots,T$ are computed (the expectation is approximately estimated on $10^5$ testing points in $\Gamma$). 

We plot the curves of average training losses versus iterations in Figure \ref{Fig01}. It is observed that the average training losses continue to decrease to a level between $10^{-3}$ and $10^{-4}$, which corresponds to the target accuracy $O(\epsilon)$ as predicted by the theory.

Also, we list the average training losses and expected losses after $T$ iterations in Table \ref{Tab01}. First, for every width $m$, the training loss keeps the same magnitude for $N=100,1000,10000$, which implies that even if the number of samples increases significantly, the width required for the training loss to reach a certain value remains the same; namely, the width requirement is independent of the training data size. This is consistent with Theorem \ref{thm3.4}. Moreover, for every $m$, the expected loss decreases and gets closer to the training loss as $N$ increases, which means that achieving a small expected loss requires a sufficient number of training samples. This result is partially reflected by Theorem \ref{thm4.2} that sufficient $N$ is required for SGD to generalize well.

\begin{figure}
\centering
\subfloat[$N=100$]{
\includegraphics[scale=0.35]{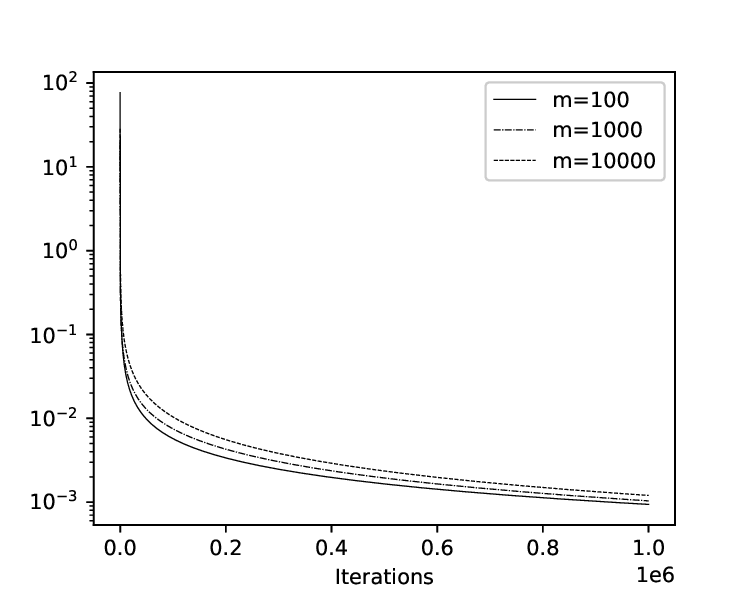}}
\subfloat[$N=1000$]{
\includegraphics[scale=0.35]{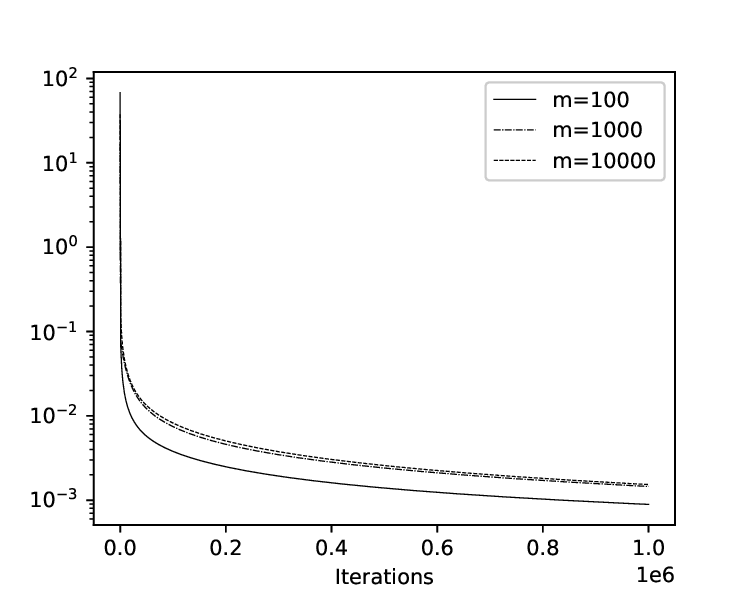}}
\subfloat[$N=10000$]{
\includegraphics[scale=0.35]{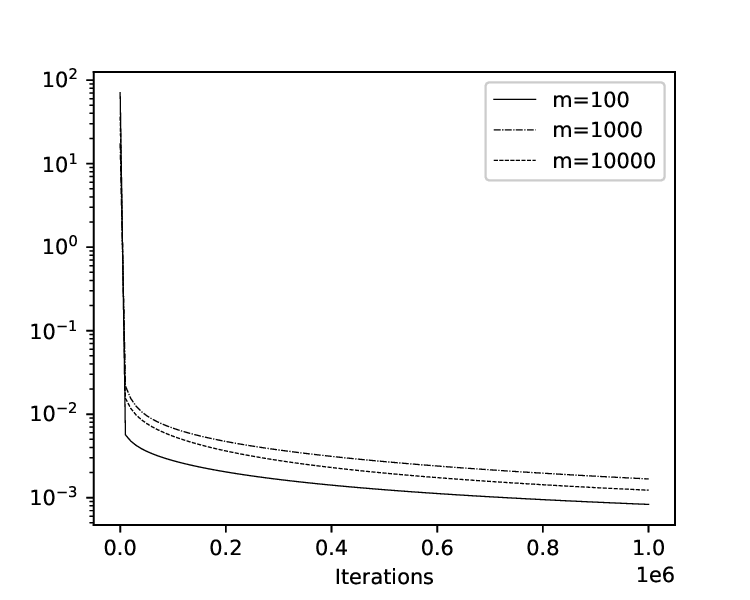}}
\caption{The training loss curves for various $m$ and $N$ during $T=10^6$ iterations.}
\label{Fig01}
\end{figure}

\begin{table}
\centering
\begin{tabular}{l|c|c|c}
& $m=100$ & $m=1000$ & $m=10000$\\\hline
$N=100$ & 9.40e-04 / 2.31e-03 & 1.03e-03 / 3.91e-03 & 1.20e-03 / 5.53e-03 \\\hline
$N=1000$ & 8.91e-04 / 9.99e-04 & 1.45e-03 / 1.72e-03 & 1.53e-03 / 1.71e-03 \\\hline
$N=10000$ & 8.31e-04 / 8.60e-04 & 1.68e-03 / 1.70e-03 & 1.23e-03 / 1.24e-03 \\
\end{tabular}
\caption{The final average training loss/average expected loss for various $m$ and $N$ after $T=10^6$ iterations.}
\label{Tab01}
\end{table}

\section{Conclusion}
This work establishes theoretical guarantees for successfully training two-layer PINNs using SGD. We construct a function class for the target function, i.e., the governing function of the PDE. After that, we analyze the optimization dynamics of SGD, obtaining the bounds for the average training loss. Specifically, we prove that the training loss can be decreased below $O(\epsilon)$ if the network width is larger than $\frac{c}{\epsilon^p}$ for some $p>0$ and some problem-dependent constant $c>0$, which is independent of the training data size; namely, the result does not require the over-parametrization hypothesis. A Similar result for the expected risk is also derived using Rademacher complexity. While we conduct the analysis on the PINNs associated with Poisson's equation, the framework can be easily extended to other types of PDEs.

One limitation of this paper lies in that only shallow PINNs with one hidden layer are considered. However, training deep neural networks may be essentially different from shallow ones since the weights of the outer layers and inner layers have distinct gradient representations. Future work could consider the behavior of gradient descent in training slightly deeper (e.g., three-layer) networks.

Another limitation lies in that we consider the PDE on the simple domain, i.e., the unit ball, which allows a simple network architecture of PINN that satisfies the boundary condition automatically. This simplifies the PINN model using a one-term loss for every training sample. However, such simplification is not always available for general domains, and the loss could contain two or more coupled terms, making analysis difficult. So, future work could also be studying PDEs on general domains with types of boundary or initial conditions.

\appendix 

\section{Technical Proofs for Lemmas}
\subsection{Proof of Lemma \ref{lem03}}

We use $\xi$ to denote every random variable in $\Xi$. For any $i\in\{1,\dots,m\}$, we let $\widetilde{\xi_n}$ be a random variable i.i.d. with $\xi_n$. Also, let $\widetilde{\Xi} = \{\xi_1, \cdots, \widetilde{\xi_n}, \cdots, \xi_m\}$ be a copy of $\Xi$ with the $i$-th element replaced by $\widetilde{\xi_n}$, and let $\bar{\widetilde{\Xi}}=$ be its average. Denote $E_\xi:=\mE\xi=\mE\overline{\Xi}=\mE\bar{\widetilde{\Xi}}$ and $h(\Xi):=\|\overline{\Xi}-\mE\overline{\Xi}\|$. Applying the triangle inequality give  
\begin{equation}
| h(\Xi) - h(\widetilde{\Xi}) | = \left|\|\overline{\Xi}-E_\xi\| - \|\bar{\widetilde{\Xi}}-E_\xi\|\right| \leq \| \overline{\Xi} - \bar{\widetilde{\Xi}} \| = \frac{\|\xi_n - \widetilde{\xi_n}\|}{m} 
\leq \frac{\|\xi_n \|+\|\widetilde{\xi_n}\|}{m} \leq \frac{2C}{m}.
\end{equation}

Next, we use the variance identity for the mean of i.i.d. random variables $\overline{\Xi}$ 
\begin{equation} 
\tVar(\overline{\Xi}) = \tVar(\frac{1}{m}\sum_{i=1}^{m}\xi_n)=\frac{1}{m}\tVar(\xi),
\end{equation}
which leads to
\begin{multline}\label{10}
\mE \left\|\overline{\Xi}-\mE\overline{\Xi}\right\|^2=\frac{1}{m}\mE\|\xi-\mE\xi\|^2=\frac{1}{m} \mE\langle\xi-E_\xi,\xi-E_\xi\rangle\\
=\frac{1}{m}(\mE\|\xi\|^2-2\mE\langle\xi,E_\xi\rangle+\|E_\xi\|^2)=\frac{1}{m}(\mE\|\xi\|^2-2\langle\mE\xi,E_\xi\rangle+\|E_\xi\|^2)\\
=\frac{1}{m}\left(\mE\|\xi\|^2-\|E_\xi\|^2\right)\leq\frac{1}{m}\mE\|\xi\|^2\leq\frac{C^2}{m}.
\end{multline}
Using Lemma \ref{lem01} Jensen’s inequality on the $\nu(t)=t^2$, we obtain
\begin{equation} (\mE h(\Xi))^2 \leq \mE(h^2(\Xi)),\end{equation}
so by \eqref{10},
\begin{equation}
\mE h(\Xi) \leq \sqrt{ \mE(h^2 (\Xi)) } = \sqrt{ \mE \left\| \overline{\Xi} - \mE \overline{\Xi} \right\|^2 } \leq \frac{C}{\sqrt{m}}.
\end{equation}

Then, applying Lemma \ref{lem02} McDiarmid’s inequality with this bound, we have
\begin{equation}
\mathbb{P} \left[ h(\Xi) - \frac{C}{\sqrt{m}} \geq \epsilon \right] \leq \mathbb{P} \left[ h(\Xi) - \mE h(\Xi) \geq \epsilon \right] \leq \exp \left( -\frac{m \epsilon^2}{2C^2} \right),
\end{equation} 
letting $\epsilon=\sqrt{\frac{2C^2 \log{(1/\delta)}}{m}}$ leads to the result.
\subsection{Proof of Lemma \ref{lem04}}

Let $f(\Bx) = \int_{\Lambda} \Balpha(\theta)^\top \Bzeta(\Bx; \theta) \td\theta$ for some $\Balpha$ that achieves 
$\|f\|_{\mathcal{F}}=\max_{\theta\in \Lambda} \frac{\|\Balpha(\theta)\|_2}{p(\theta)}$. For $i = 1,\ldots,m$, we construct $\Balpha_i=\frac{\Balpha(\theta_i)}{mp(\theta_i)}$, then $\|\Balpha_i\|_2\leq\frac{\|f\|_{\mathcal{F}}}{m}$ and $g(\Bx):= \sum_{i=1}^{m} \Balpha_i^\top\Bzeta(\Bx;\theta_i)\in\mathcal{F}_{m}$. We also have  
\begin{multline}
        \mE_{\theta_1,\dots,\theta_m} [g(\Bx)] =\frac{1}{m} \sum_{i=1}^{m} \mE_{\theta_i}\left[\frac{\Balpha(\theta_i)^\top \Bzeta(\Bx;\theta_i)}{p(\theta_i)}\right]\\
        =\frac{1}{m}\sum_{i=1}^{m}\int_{\Lambda}\Balpha(\theta_i)^\top \Bzeta(\Bx;\theta_i)\td\theta_i=\int_{\Lambda} \Balpha(\theta)^\top \Bzeta(\Bx;\theta)\td\theta=f(\Bx).
\end{multline}
Note that $\Bzeta(\Bx;\theta)$ is a vector-valued function. We use $\Bzeta_{ij}(\Bx)$ to denote the $j$-th component of $\Bzeta(\Bx;\theta_i)$, and denote $\theta_i=\left(a_i^{(0)},\Bw_i^{(0)},b_i^{(0)}\right)$. By the expression \eqref{08} of $\Bzeta$ and the fact that $\|\Bx\|_2\leq1$, we have
\begin{multline}\label{30}
\max_k|\Bzeta_{ik}| \leq |a_i^{(0)}| \cdot    \Bigg (
2d\left(\|\Bw_i^{(0)}\|_2 \|\Bx\|_2+|b_i^{(0)}|\right)^2
\\+ 12\left(\|\Bw_i^{(0)}\|_2\|\Bx\|_2 \right) 
\left(\|\Bw_i^{(0)}\|_2\|\Bx\|_2+|b_i^{(0)}| \right)  
+ 6\|\Bw_i^{(0)}\|_2^2\left(\|\Bx\|_2^2 + 1\right)\Bigg)\\
\leq m^{-\alpha}\left(2d\left(d^{1/2} m^{-\beta} + m^{-\beta} \right)^2+12 d^{1/2} m^{-\beta}\left(d^{1/2} m^{-\beta} + m^{-\beta}\right)+12 d m^{-2\beta}\right)\\
= d^{-1/2}C_dm^{-\alpha-2\beta} ,
\end{multline} 
which leads to
\begin{equation}\label{29}
    \begin{split}
    \|\Bzeta(\Bx;\theta_i)\|_2=(\Bzeta_{i1}(\Bx)^2+\dots+\Bzeta_{id}(\Bx)^2)^{\frac{1}{2}}\leq C_dm^{-\alpha-2\beta} ,
    \end{split}
\end{equation}
for all $\Bx\in\Gamma$. Consider the Hilbert space $L^2_\mu(\Gamma)$ which contains functions from $\Gamma$ to $\mathbb{R}$ associated with inner product 
\begin{equation}
\langle \tilde{f}, \tilde{g}\rangle:=\int_\Gamma\tilde{f}(\Bx)\tilde{g}(\Bx)\td\mu(\Bx),\quad\forall\tilde{f},\tilde{g}\in L^2_\mu(\Gamma).
\end{equation}
By \eqref{29},
\begin{multline}
\|\Balpha_i^\top\Bzeta(\Bx;\theta_i)\|_{L^2_\mu(\Gamma)}=\sqrt{\int_\Gamma|\Balpha_i^\top\Bzeta(\Bx;\theta_i)|^2\td \mu(\Bx)}\\
\leq\sqrt{\int_\Gamma\|\Balpha_i\|_2^2\|\Bzeta(\Bx;\theta_i)\|_2^2\td \mu(\Bx)}
\leq C_dm^{-\alpha-2\beta} \|\Balpha_i\|_2 \leq C_d\|f\|_\mathcal{F}m^{-\alpha-2\beta-1}.
\end{multline}
Note that $g(\Bx)= \frac{1}{m}\sum_{i=1}^{m} m\Balpha_i^\top\Bzeta(\Bx;\theta_i)$. The proof is completed by applying Lemma \ref{lem03} to $\left\{m\Balpha_i^\top\Bzeta(\Bx;\theta_i)\right\}_{i\in[m]}$ in the Hilbert space $L^2_\mu(\Gamma)$.

\section*{Funding}
This work is supported by National Natural Science Foundation of China Major Research Plan (G0592370101).

\bibliographystyle{plainnat}
\bibliography{references}
\end{document}